\newcommand{\brac}[1]{\left( #1 \right) }
\newcommand{\ml}[1]{\mathcal{ #1 } }
\newcommand{\R}{\mathbb{R}}
\newcommand{\Req}{\textbf{Require:}\hspace*{0.5em}}
\newcommand{\X}{\hspace*{3mm}}
\newcommand{\XX}{\X\X}
\newcommand{\XXX}{\X\X\X}
\newcommand{\XXXX}{\X\X\X\X}
\newcommand{\XXXXX}{\X\X\X\X\X}
\newcommand{\cm}[1]{$\triangleright$ #1}
\newcommand{\cmark}{\ding{51}}%
\newcommand{\xmark}{\ding{55}}%
\newtheorem{theorem}{Theorem}
\newtheorem{lemma}[theorem]{Lemma}
\newtheorem{proposition}[theorem]{Proposition}
\newcommand{\vy}{\mathbf{y}}
\newcommand{\vytrue}{\vy^*}
\newcommand{\vx}{\mathbf{x}}
\newcommand{\param}{\Theta}
\newcommand{\vp}{\mathbf{p}}
\newcommand{\vg}{\mathbf{g}}
\newcommand{\vq}{\mathbf{t}}
\newcommand{\vv}{\mathbf v}
\newcommand{\vz}{\mathbf z}
\newcommand{\pmlabels}{\mathbf{\varepsilon}}
\newcommand{\pmlabelstrue}{\pmlabels^\ast}
\newcommand{\noise}{\Delta} 
\newcommand{\func}{\ml{N}}
\newcommand{\smax}{\ml{S}}
\newcommand{\coeff}{\kappa}
\newcommand{\correct}{C}
\newcommand{\wrong}{W}
\newcommand{\E}{\mathbb E}
\newcommand{\1}{\mathds{1}}
\newcommand{\PP}{\mathbb{P}}
\title{Early-Learning Regularization Prevents Memorization of Noisy Labels}
\author{%
  Sheng Liu \\
  Center for Data Science\\
  New York University\\
  \texttt{shengliu@nyu.edu} \\
  \And
  Jonathan Niles-Weed\\
   Center for Data Science, and\\
   Courant Inst. of Mathematical Sciences\\
   New York University\\
   \texttt{jnw@cims.nyu.edu} \\
  \And
  Narges Razavian\\
  Department of Population Health, and\\
  Department of Radiology\\
  NYU School of Medicine\\
   \texttt{narges.razavian@nyulangone.org} \\
  \And
  Carlos Fernandez-Granda\\
  Center for Data Science, and\\
   Courant Inst. of Mathematical Sciences\\
   New York University\\
   \texttt{cfgranda@cims.nyu.edu} \\
}
\begin{document}

\maketitle

\begin{abstract}
  We propose a novel framework to perform classification via deep learning in the presence of noisy annotations. When trained on noisy labels, deep neural networks have been observed to first fit the training data with clean labels during an ``early learning'' phase, before eventually memorizing the examples with false labels. We prove that early learning and memorization are fundamental phenomena in high-dimensional classification tasks, even in simple linear models, and give a theoretical explanation in this setting. Motivated by these findings, we develop a new technique for noisy classification tasks, which exploits the progress of the early learning phase. In contrast with existing approaches, which use the model output during early learning to detect the examples with clean labels, and either ignore or attempt to correct the false labels, we take a different route and instead capitalize on early learning via regularization. There are two key elements to our approach. First, we leverage semi-supervised learning techniques to produce target probabilities based on the model outputs. Second, we design a regularization term that steers the model towards these targets, implicitly preventing memorization of the false labels. The resulting framework is shown to provide robustness to noisy annotations on several standard benchmarks and real-world datasets, where it achieves results comparable to the state of the art.
 \end{abstract}

\section{Introduction}

Deep neural networks have become an essential tool for classification tasks~\citep{krizhevsky2012imagenet,he2016deep,girshick2014rich}. These models tend to be trained on large curated datasets such as CIFAR-10~\citep{krizhevsky2009learning} or ImageNet~\citep{deng2009imagenet}, where the vast majority of labels have been manually verified. Unfortunately, in many applications such datasets are not available, due to the cost or difficulty of manual labeling (e.g.~\citep{guan2018said,pechenizkiy2006class,liu20a,ait2010high}). However, datasets with lower quality annotations, obtained for instance from online queries~\citep{blum2003noise} or crowdsourcing~\citep{yan2014learning,yu2018learning}, may be available. Such annotations inevitably contain numerous mistakes or \emph{label noise}. It is therefore of great importance to develop methodology that is robust to the presence of noisy annotations.

 When trained on noisy labels, deep neural networks have been observed to first fit the training data with clean labels during an \emph{early learning} phase, before eventually \emph{memorizing} the examples with false labels~\citep{arpit2017closer,zhang2016understanding}. In this work we study this phenomenon and introduce a novel framework that exploits it to achieve robustness to noisy labels. Our main contributions are the following:
 
\begin{itemize}[leftmargin=*]
\item In Section~\ref{sec:linear} we establish that early learning and memorization are fundamental phenomena in high dimensions, proving that they occur even for simple linear generative models.
\item In Section~\ref{sec:methodology} we propose a technique that utilizes the early-learning phenomenon to counteract the influence of the noisy labels on the gradient of the cross entropy loss. This is achieved through a regularization term that incorporates target probabilities estimated from the model outputs using several semi-supervised learning techniques.
\item In Section~\ref{sec:results} we show that the proposed methodology achieves results comparable to the state of the art on several standard benchmarks and real-world datasets. We also perform a systematic ablation study to evaluate the different alternatives to compute the target probabilities, and the effect of incorporating mixup data augmentation~\citep{zhang2017mixup}.
\end{itemize}

\begin{figure}[t]
    \begin{tabular}{>{\centering\arraybackslash}m{0.13\linewidth} >{\centering\arraybackslash}m{0.4\linewidth} >{\centering\arraybackslash}m{0.4\linewidth}}
    & {\small Clean labels} & {\small Wrong labels}\\
    {\small Cross Entropy}
    & \includegraphics[width=\linewidth]{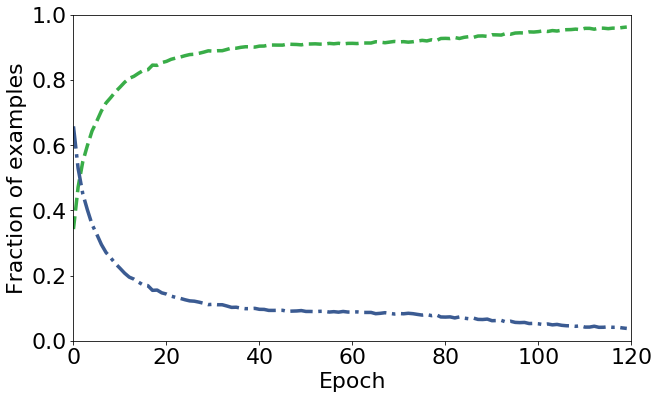}&
    \includegraphics[width=\linewidth]{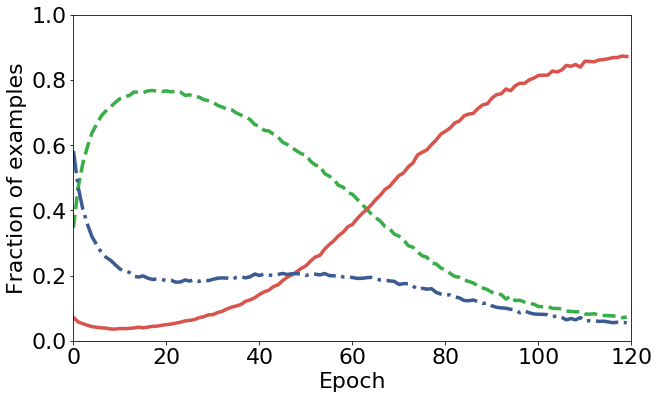}\\
    {\small \shortstack{Early-learning\\Regularization}}
    & \includegraphics[width=\linewidth]{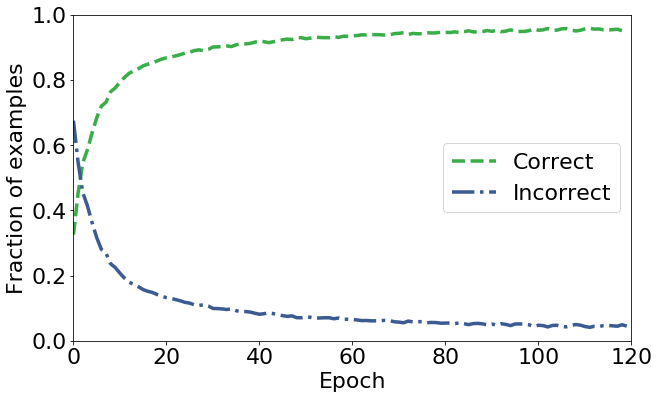}&
    \includegraphics[width=\linewidth]{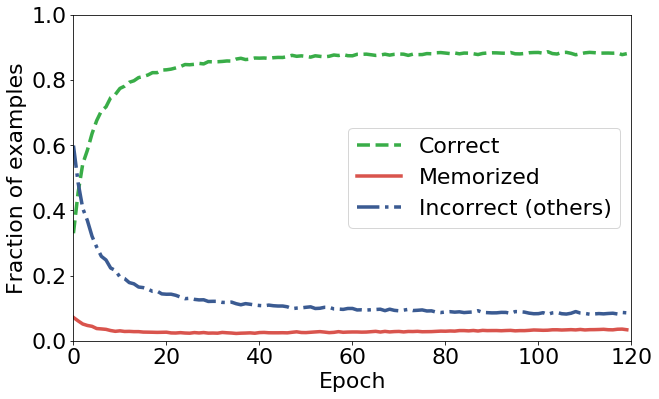}
  \end{tabular}

    \caption{Results of training a ResNet-34~\citep{he2016deep} neural network with a traditional cross entropy loss (top row) and our proposed method (bottom row) to perform classification on the CIFAR-10 dataset where 40\% of the labels are flipped at random. The left column shows the fraction of examples with clean labels that are predicted correctly (green) and incorrectly (blue). The right column shows the fraction of examples with wrong labels that are predicted correctly (green), \emph{memorized} (the prediction equals the wrong label, shown in red), and incorrectly predicted as neither the true nor the labeled class (blue). The model trained with cross entropy begins by learning to predict the true labels, even for many of the examples with wrong label, but eventually memorizes the wrong labels. Our proposed method based on early-learning regularization prevents memorization, allowing the model to continue learning on the examples with clean labels to attain high accuracy on examples with both clean and wrong labels.}
    \label{fig:CE_vs_ELR}
\end{figure}

\vspace{-1mm}
\section{Related Work}
\label{sec:related_work}
In this section we describe existing techniques to train deep-learning classification models using data with noisy annotations. 
We focus our discussion on methods that do not assume the availability of small subsets of training data with clean labels (as opposed, for example, to~\citep{Hendrycks2018UsingTD,Ren2018LearningTR,veit2017learning}). We also assume that the correct classes are known (as opposed to~\citep{Wang2018IterativeLW}). 

\emph{Robust-loss} methods propose cost functions specifically designed to be robust in the presence of noisy labels. These include Mean Absolute Error (MAE)~\citep{ghosh2017robust}, Improved MAE~\citep{wang2019imae}, which is a reweighted MAE, Generalized Cross Entropy~\citep{zhang2018generalized}, which can be interpreted as a generalization of MAE, Symmetric Cross Entropy~\citep{Wang2019SymmetricCE}, which adds a reverse cross-entropy term to the usual cross-entropy loss, and $\mathcal{L}_{\text{DIM}}$~\citep{Xu2019L_DMIAN}, which is based on information-theoretic considerations. \emph{Loss-correction} methods explicitly correct the loss function to take into account the noise distribution, represented by a transition matrix of mislabeling probabilities~\citep{patrini2017making,Goldberger2017TrainingDN,xia2019anchor, tanno2019learning}. 

Robust-loss and loss-correction techniques do not exploit the early-learning phenomenon mentioned in the introduction. This phenomenon was described in~\citep{arpit2017closer} (see also~\citep{zhang2016understanding}), and analyzed theoretically in~\citep{li2019gradient}. Our theoretical approach differs from theirs in two respects. First, Ref.~\citep{li2019gradient} focus on a least squares regression task, whereas we focus on the noisy label problem in classification. Second, and more importantly, we prove that early learning and memorization occur even in a \emph{linear} model.

Early learning can be exploited through \emph{sample selection}, where the model output during the early-learning stage is used to predict which examples are mislabeled and which have been labeled correctly. The prediction is based on the observation that mislabeled examples tend to have higher loss values. Co-teaching~\citep{Han2018CoteachingRT,Yu2019HowDD} performs sample selection by using two networks, each trained on a subset of examples that have a small training loss for the other network (see~\citep{Jiang2018MentorNetLD,malach2017decoupling} for related approaches). A limitation of this approach is that the examples that are selected tend to be \emph{easier}, in the sense that the model output during early learning approaches the true label. As a result, the gradient of the cross-entropy with respect to these examples is small, which slows down learning~\citep{chang2017active}. In addition, the subset of selected examples may not be rich enough to generalize effectively to held-out data~\citep{song2019selfie}.

An alternative to sample selection is \emph{label correction}. During the early-learning stage the model predictions are accurate on a subset of the mislabeled examples (see the top row of Figure~\ref{fig:CE_vs_ELR}). This suggests correcting the corresponding labels. This can be achieved by computing new labels equal to the probabilities estimated by the model (known as \emph{soft labels}) or to one-hot vectors representing the model predictions (\emph{hard labels})~\citep{Tanaka2018JointOF,yi2019probabilistic}. Another option is to set the new labels to equal a convex combination of the noisy labels and the soft or hard labels~\citep{Reed2015TrainingDN}. Label correction is usually combined with some form of iterative sample selection~\citep{Arazo2019unsup,Ma2018DimensionalityDrivenLW,song2019selfie,li2020dividemix} or with additional regularization terms~\citep{Tanaka2018JointOF}.  SELFIE~\citep{song2019selfie} uses label replacement to correct a subset of the labels selected by considering past model outputs. Ref.~\citep{Ma2018DimensionalityDrivenLW} computes a different convex combination with hard labels for each example based on a measure of model dimensionality. Ref.~\citep{Arazo2019unsup} fits a two-component mixture model to carry out sample selection, and then corrects labels via convex combination as in \citep{Reed2015TrainingDN}. They also apply mixup data augmentation~\citep{zhang2017mixup} to enhance performance. In a similar spirit, DivideMix~\citep{li2020dividemix} uses two networks to perform sample selection via a two-component mixture model, and applies the semi-supervised learning technique MixMatch~\citep{berthelot2019mixmatch}.  

Our proposed approach is somewhat related in spirit to label correction. We compute a probability estimate that is analogous to the soft labels mentioned above, and then exploit it to avoid memorization. However it is also fundamentally different: instead of modifying the labels, we propose a novel regularization term explicitly designed to correct the gradient of the cross-entropy cost function. This yields strong empirical performance, without needing to incorporate sample selection.

\vspace{-1mm}
\section{Early learning as a general phenomenon of high-dimensional classification}
\label{sec:linear}

As the top row of Figure~\ref{fig:CE_vs_ELR} makes clear, deep neural networks trained with noisy labels make progress during the early learning stage before memorization occurs.
In this section, we show that far from being a peculiar feature of deep neural networks, this phenomenon is intrinsic to high-dimensional classification tasks, even in the simplest setting.
Our theoretical analysis is also the inspiration for the early-learning regularization procedure we propose in Section~\ref{sec:methodology}.

We exhibit a simple \emph{linear} model with noisy labels which evinces the same behavior as described above: the \emph{early learning} stage, when the classifier learns to correctly predict the true labels, even on noisy examples, and the \emph{memorization} stage, when the classifier begins to make incorrect predictions because it memorizes the wrong labels. This is illustrated in Figure~\ref{fig:CE_vs_ELR_linear}, which demonstrates that empirically the linear model has the same qualitative behavior as the deep-learning model in Figure~\ref{fig:CE_vs_ELR}.
We show that this behavior arises because, early in training, the gradients corresponding to the correctly labeled examples dominate the dynamics---leading to early progress towards the true optimum---but that the gradients corresponding to wrong labels soon become dominant---at which point the classifier simply learns to fit the noisy labels.

We consider data drawn from a mixture of two Gaussians in $\mathbb R^p$.
The (clean) dataset consists of $n$ i.i.d.~copies of $(\mathbf x, \vytrue)$. The label $\vytrue \in \{0, 1\}^2$ is a one-hot vector representing the cluster assignment, and 
\begin{align*}
    \mathbf x & \sim \mathcal N(+\vv , \sigma^2 I_{p \times p}) \quad \text{ if $\vytrue = (1, 0)$} \\
    \mathbf x & \sim \mathcal N(-\vv, \sigma^2 I_{p \times p}) \quad \text{ if $\vytrue = (0, 1) $}\,,
\end{align*}
where $\vv$ is an arbitrary unit vector in $\mathbb R^p$ and $\sigma^2$ is a small constant.
The optimal separator between the two classes is a hyperplane through the origin perpendicular to $\vv$.
We focus on the setting where $\sigma^2$ is fixed while $n, p \to \infty$.
In this regime, the classification task is nontrivial, since the clusters are, approximately, two spheres whose centers are separated by 2 units with radii $\sigma \sqrt{p} \gg 2$.

We only observe a dataset with noisy labels $(\vy^{[1]}, \dots, \vy^{[n]})$, 
\begin{align}
    \vy^{[i]} = \left\{\begin{array}{ll}
    (\vytrue)^{[i]} & \text{with probability $1 - \noise$} \\
    \tilde \vy^{[i]} & \text{with probability $\noise$,}
    \end{array}\right.
    \label{eq:sym_noise}
\end{align}
where $\{\tilde \vy^{[i]}\}_{i=1}^n$ are i.i.d.~random one-hot vectors which take values $(1, 0)$ and $(0, 1)$ with equal probability.

We train a linear classifier by gradient descent on the cross entropy:
\begin{equation*}
    \min_{\param \in \mathbb R^{2\times p}} \mathcal L_\text{CE}(\param) := - \frac 1n \sum_{i=1}^n \sum_{c = 1}^2 \vy^{[i]}_c \log (\smax(\param \vx^{[i]})_c)\,,
\end{equation*}
where $\smax: \mathbb R^2 \to [0, 1]^2$ is a softmax function. 
In order to separate the true classes well (and not overfit to the noisy labels), the rows of $\param$ should be correlated with the vector $\vv$.

The gradient of this loss with respect to the model parameters $\param$ corresponding to class $c$ reads
\begin{align}\label{eq:linear_gradient}
\nabla \mathcal{L}_\text{CE}(\param)_c & = \frac{1}{n}\sum_{i=1}^n \vx^{[i]} \left(  \smax(\param \vx^{[i]})_c - \vy^{[i]}_c\right), 
\end{align}
Each term in the gradient therefore corresponds to a weighted sum of the examples $\vx^{[i]}$, where the weighting depends on the agreement between $\smax(\param \vx^{[i]})_c$ and $\vy^{[i]}_c$.

Our main theoretical result shows that this linear model possesses the properties described above.
During the early-learning stage, the algorithm makes progress and the accuracy on wrongly labeled examples increases.
However, during this initial stage, the relative importance of the wrongly labeled examples continues to grow; once the effect of the wrongly labeled examples begins to dominate, memorization occurs.

\begin{theorem}[Informal]\label{thm:main}
Denote by $\{\param_t\}$ the iterates of gradient descent with step size $\eta$.
For any $\Delta \in (0, 1)$, there exists a constant $\sigma_\Delta$ such that, if $\sigma \leq \sigma_\Delta$
and $p/n \in (1-\Delta/2, 1)$, then with probability $1 - o(1)$ as $n, p \to \infty$ there exists a $T = \Omega(1/\eta)$ such that:
\begin{itemize}
\item \textbf{Early learning succeeds}:  For $t < T$, $-\nabla \mathcal L(\param_t)$ is well correlated with the correct separator $\vv$, and at $t = T$ the classifier has higher accuracy on the wrongly labeled examples than at initialization.
\item \textbf{Gradients from correct examples vanish}: Between $t = 0$ and $t = T$, the magnitudes of the coefficients $\left(  \smax(\param_t \vx^{[i]})_c - \vy^{[i]}_c\right)$ corresponding to examples with clean labels decreases while the magnitudes of the coefficients for examples with wrong labels increases. 
\item \textbf{Memorization occurs}: As $t \to \infty$, the classifier $\param_t$ memorizes all noisy labels.
\end{itemize}
\end{theorem}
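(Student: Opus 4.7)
My plan is to reduce to a one-dimensional logistic regression, set up a signal/noise decomposition along $\vv$, and then address the three bullets in sequence. Concretely, by binary symmetry of the softmax I replace $\param \in \R^{2 \times p}$ by the scalar logistic parameter $\theta = \param_1 - \param_2 \in \R^p$ and use $\pm 1$ labels $\pmlabels^{[i]}$, $\pmlabelstrue^{[i]}$. Writing each observation as $\vx^{[i]} = \pmlabelstrue^{[i]} \vv + \sigma \vz^{[i]}$ with $\vz^{[i]} \sim \ml{N}(0, I_p)$ independent, I orthogonally decompose each iterate as $\theta_t = \alpha_t \vv + \theta_t^\perp$, so that the analysis reduces to controlling the scalar $\alpha_t$ and the vector $\theta_t^\perp \perp \vv$.

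For the \textbf{early-learning} bullet, a direct computation gives population gradient $-\tfrac{1-2\noise}{2}\vv$ at initialization, and in general a drift of order $\Theta(1)$ along $\vv$ as long as $\alpha_t$ is small, while the drift in $\theta^\perp$ vanishes in expectation. A Hanson-Wright / matrix-Bernstein concentration argument on the sample covariance of the $\vz^{[i]}$'s shows that up to time $T = \Theta(1/\eta)$ the empirical gradient stays within $o(1)$ of its population counterpart, so $\alpha_t$ grows linearly at rate $\eta(1 - 2\noise)/2$ while $\normTwo{\theta_t^\perp}$ remains of lower order. Once $\alpha_T = \Theta(1)$, the sign of $\theta_T^\top \vx^{[i]}$ matches $\pmlabelstrue^{[i]}$ for almost every $i$ --- including those with flipped labels --- which yields both the claimed correlation of $-\nabla \ml{L}_\text{CE}(\param_T)$ with $\vv$ and the predicted gain in accuracy on the wrongly labeled points. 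The same control over $\alpha_t$ and $\normTwo{\theta_t^\perp}$, together with a uniform-in-$i$ bound on $|\langle \theta_t^\perp, \vz^{[i]} \rangle|$, forces the margin $\pmlabels^{[i]} \theta_t^\top \vx^{[i]}$ to be positive and increasing on $[0, T]$ for clean examples and negative and decreasing for flipped ones, which by monotonicity of the sigmoid yields the \textbf{coefficient-magnitudes} bullet.

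For \textbf{memorization} I would invoke the implicit bias of gradient descent on the logistic loss: provided the noisy-labeled dataset is linearly separable through the origin, $\theta_t / \normTwo{\theta_t}$ converges in direction to the $L_2$ max-margin separator, which by construction classifies every training example according to its noisy label. The remaining task is to verify separability in the regime $p/n > 1 - \noise/2$, which follows from a Cover-style function-counting argument combined with a Gordon comparison on the Gaussian process indexed by candidate separators; the threshold $1 - \noise/2$ arises precisely as the capacity needed to absorb the fraction $\noise$ of examples whose labels oppose the $\vv$-direction separator.

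The \textbf{main obstacle} is the coupled bookkeeping during the early-learning phase: I must simultaneously control $\alpha_t$ from below, $\normTwo{\theta_t^\perp}$ from above, and the individual perpendicular inner products $\langle \theta_t^\perp, \vz^{[i]} \rangle$ uniformly over $i$. These quantities evolve as interlocking stochastic sums whose second moments grow at comparable rates, so separating signal from noise requires careful high-dimensional concentration together with a union bound over the $n$ examples. This is tractable only because, in the regime of small $\sigma$ and $p/n < 1$, the signal drift along $\vv$ is $\Theta(1)$ while the perpendicular drift is $o(1)$, giving a window of size $\Theta(1/\eta)$ in which signal dominates before memorization can set in.
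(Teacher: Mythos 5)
Your skeleton matches the paper's: reduce to logistic regression in $\theta=\param_1-\param_2$, show the gradient's component along $\vv$ dominates early, and obtain memorization from linear separability plus the implicit bias of gradient descent on the logistic loss (the paper cites Soudry et al.\ for exactly this). Where you diverge is in the machinery for the early phase. The paper does not track $\alpha_t=\theta_t^\top\vv$ and $\theta_t^\perp$ as explicit coupled recursions; it defines $T$ as the first time $\theta_T^\top\vv\ge 1/10$, proves by induction that every increment lies in the convex cone $\{g: g^\top\vv/\|g\|\ge 1/6\}$ (whence $\|\theta_t-\theta_0\|\le 1$ for $t<T$ automatically), and controls the data-dependent terms by bounds that are \emph{uniform over a ball} $\|\theta-\theta_0\|\le\tau$ rather than by following the trajectory. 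This sidesteps the ``coupled bookkeeping'' obstacle you correctly identify. Your route is workable in outline, but two of your specific steps hide real difficulties. First, a ``uniform-in-$i$ bound on $|\langle\theta_t^\perp,\vz^{[i]}\rangle|$'' for a data-dependent $\theta_t$ is not a union bound: $\theta_t^\perp$ is built from the $\vz^{[i]}$'s, so you would need chaining or a supremum over a deterministic class, and even then a per-example, monotone-in-$t$ statement about the margins is stronger than what is true or needed. The paper only proves the coefficient claim in aggregate (the $\ell_2$-average of $\tanh(\theta^\top\vx^{[i]})-\pmlabels^{[i]}$ over $\correct$ and over $\wrong$), comparing $t=0$ to $t=T$ via a supremum bound over $\{\theta:\|\theta-\theta_0\|\le\tau\}$. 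Similarly, the accuracy gain at $t=T$ cannot be obtained by applying the law of large numbers at $\theta_T$, since $\theta_T$ depends on the sample; the paper uses Mendelson's small-ball method (symmetrization plus contraction) to lower-bound the accuracy uniformly over the cone $\{\theta:\theta^\top\vv/\|\theta\|\ge\tau\}$, and your plan needs an analogous uniformization.

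On memorization, your Cover-style counting is the paper's route, but the Gordon comparison is unnecessary and does not by itself resolve the key subtlety: the dichotomy $S_+$ induced by the noisy labels is not uniform over the $2^n$ dichotomies --- it concentrates near the true partition $T_+$, with only a $\noise/2$ fraction flipped (note your fraction ``$\noise$'' should be $\noise/2$ under the paper's replace-with-uniform noise model; the same slip gives your initialization drift constant $(1-2\noise)/2$ where the model yields $(1-\noise)/2$, since $\E[\pmlabels\pmlabelstrue]=1-\noise$). The paper handles this by conditioning on the data, bounding $\PP[S_+\in B\mid X]$ through a small linear program over the counts $|\{S\in B:|T_+\triangle S|=k\}|$, and reducing to the binomial tail $\PP[\mathrm{Bin}(n,\noise/2)\le n-p]\to 0$ when $p/n>1-\noise/2$. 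You should either reproduce that conditioning argument or make precise how a Gaussian-process comparison accounts for the correlation between the labels and the geometry; as stated, this is the one step of your plan that would fail without an additional idea.
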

Due to space constraints, we defer the formal statement of Theorem~\ref{thm:main} and its proof to the supplementary material.

The proof of Theorem~\ref{thm:main} is based on two observations. First, while $\param$ is still not well correlated with $\vv$, the coefficients $\smax(\param \vx^{[i]})_c - \vy^{[i]}_c$ are similar for all $i$, so that $\nabla \mathcal L_\text{CE}$ points approximately in the average direction of the examples. Since the majority of data points are correctly labeled, this means the gradient is still well correlated with the correct direction during the early learning stage. Second, once $\param$ becomes correlated with $\vv$, the gradient begins to point in directions orthogonal to the correct direction $\vv$; when the dimension is sufficiently large, there are enough of these orthogonal directions to allow the classifier to completely memorize the noisy labels. 

This analysis suggests that in order to learn on the correct labels and avoid memorization it is necessary to (1) ensure that the contribution to the gradient from examples with clean labels remains large, and (2) neutralize the influence of the examples with wrong labels on the gradient. In Section 4 we propose a method designed to achieve this via regularization.

\section{Methodology}\label{sec:methodology}

\subsection{Gradient analysis of softmax classification from noisy labels}
\label{sec:motivation}
In this section we explain the connection between the linear model from Section~\ref{sec:linear} and deep neural networks. Recall the gradient of the cross-entropy loss with respect to $\param$ given in~\eqref{eq:linear_gradient}.
Performing gradient descent modifies the parameters iteratively to push $\smax(\param \vx^{[i]})$ closer to $\vy^{[i]}$. If $c$ is the true class so that $\vy^{[i]}_c=1$, the contribution of the $i$th example to $\nabla \mathcal{L}_\text{CE}(\param)_c$ is aligned with $-\vx^{[i]}$, and gradient descent moves in the direction of $\vx^{[i]}$. However, if the label is noisy and $\vy^{[i]}_c=0$, then gradient descent moves in the opposite direction, which eventually leads to memorization as established by Theorem~\ref{thm:main}. 

We now show that for nonlinear models based on neural networks, the effect of label noise is analogous. We consider a classification problem with $C$ classes, where the training set consists of $n$ examples $\{\vx^{[i]}, \vy^{[i]}\}_{i=1}^n$, $\vx^{[i]} \in \R^d$ is the $i$th input and $\vy^{[i]}\in \{0,1\}^C$ is a one-hot label vector indicating the corresponding class. The classification model maps each input $\vx^{[i]}$ to a $C$-dimensional encoding using a deep neural network $\func_{\vx^{[i]}}(\param)$ and then feeds the encoding into a softmax function $\ml{S}$ to produce an estimate $\vp^{[i]}$ of the conditional probability of each class given $\vx^{[i]}$, 
\begin{align}
\vp^{[i]} := \smax \left( \func_{\vx^{[i]}}(\param) \right).
\end{align}
$\param$ denotes the parameters of the neural network. The gradient of the cross-entropy loss,
\begin{align} 
\mathcal{L}_\text{CE}(\param) &:= -\frac{1}{n}\sum_{i=1}^n\sum_{c=1}^{C} \vy^{[i]}_c \log \vp^{[i]}_c,
\end{align}
with respect to $\param$ equals
\begin{align} 
\nabla \mathcal{L}_\text{CE}(\param) 
 & = \frac{1}{n}\sum_{i=1}^n \nabla \func_{\vx^{[i]}}(\param) \left(  \vp^{[i]} - \vy^{[i]}\right),
 \label{eq:ce_grad}
\end{align}
where $\nabla \func_{\vx^{[i]}}(\param)$ is the Jacobian matrix of the neural-network encoding for the $i$th input with respect to $\param$. Here we see that label noise has the same effect as in the simple linear model. If $c$ is the true class, but $\vy^{[i]}_c=0$ due to the noise, then the contribution of the $i$th example to $\nabla \mathcal{L}_\text{CE}(\param)_c$ is reversed. The entry corresponding to the \emph{impostor} class $c'$, is also reversed because $\vy^{[i]}_{c'}=1$. As a result, performing stochastic gradient descent eventually results in memorization, as in the linear model (see Figures~\ref{fig:CE_vs_ELR} and~
\ref{fig:CE_vs_ELR_linear}). Crucially, the influence of the label noise on the gradient of the cross-entropy loss is restricted to the term $\vp^{[i]} - \vy^{[i]}$ (see Figure~\ref{fig:Gradient_CE}). In Section~\ref{sec:ELR} we describe how to counteract this influence by exploiting the early-learning phenomenon.

\subsection{Early-learning regularization}
\label{sec:ELR}
In this section we present a novel framework for learning from noisy labels called early-learning regularization (ELR). We assume that we have available a \emph{target}\footnote{The term target is inspired by semi-supervised learning where target probabilities are used to learn on unlabeled examples~\citep{yarowsky1995unsupervised, mcclosky2006effective, laine2016temporal}.} vector of probabilities $\vq^{[i]}$ for each example $i$, which is computed using past outputs of the model. Section~\ref{sec:targets} describes several techniques to compute the targets. Here we explain how to use them to avoid memorization. 

Due to the early-learning phenomenon, we assume that at the beginning of the optimization process the targets do not overfit the noisy labels. ELR exploits this using a regularization term that seeks to maximize the inner product between the model output and the targets,
\begin{align} 
\mathcal{L}_\text{ELR}(\param) &:=\mathcal{L}_\text{CE}(\param) +  \frac{ \lambda}{n}\sum_{i=1}^n \log \left(1-\langle \vp^{[i]},\vq^{[i]} \rangle\right). \label{eq:ELR}
\end{align}
The logarithm in the regularization term counteracts the exponential function implicit in the softmax function in $\vp^{[i]}$. A possible alternative to this approach would be to penalize the Kullback-Leibler divergence between the model outputs and the targets. However, this does not exploit the early-learning phenomenon effectively, because it leads to overfitting the targets as demonstrated in Section~\ref{sec:kl}. 

The key to understanding why ELR is effective lies in its gradient, derived in the following lemma, which is proved in Section~\ref{sec:proof_lemma}.

\begin{lemma}[Gradient of the ELR loss]
\label{lemma:ELR_gradient}
The gradient of the loss defined in Eq.~\eqref{eq:ELR} is equal to
\begin{align} 
\nabla \mathcal{L}_\text{ELR}(\param) & = \frac{1}{n}\sum_{i=1}^n \nabla \func_{\vx^{[i]}}(\param) \brac{ \vp^{[i]} - \vy^{[i]} + \lambda \vg^{[i]} }
\end{align}
where the entries of $\vg^{[i]} \in \R^{C}$ are given by
\begin{align}
\vg^{[i]}_c := \frac{\vp^{[i]}_{c}}{1-\langle \vp^{[i]},\vq^{[i]} \rangle}\sum_{k=1}^C(\vq_{k}^{[i]}-\vq_{c}^{[i]})\vp_{k}^{[i]}, \qquad 1\leq c \leq C.
\end{align}
\end{lemma}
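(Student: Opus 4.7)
The plan is a straightforward chain-rule computation. I would start by linearity of the gradient, splitting
\begin{equation*}
\nabla \mathcal{L}_\text{ELR}(\param) = \nabla \mathcal{L}_\text{CE}(\param) + \frac{\lambda}{n}\sum_{i=1}^n \nabla_\param \log\!\left(1-\langle \vp^{[i]}, \vq^{[i]}\rangle\right),
\end{equation*}
and using the expression~\eqref{eq:ce_grad} for $\nabla \mathcal{L}_\text{CE}(\param)$. It then suffices to show, for each $i$, that the regularizer contributes $\nabla \func_{\vx^{[i]}}(\param)\,\vg^{[i]}$, since factoring $\nabla\func_{\vx^{[i]}}(\param)$ out of the sum then yields the claimed formula.

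Next, I would apply the chain rule through the encoding. Writing $\vz^{[i]} := \func_{\vx^{[i]}}(\param) \in \R^C$ so that $\vp^{[i]} = \smax(\vz^{[i]})$, we have
\begin{equation*}
\nabla_\param \log\!\left(1-\langle \vp^{[i]}, \vq^{[i]}\rangle\right) = \nabla \func_{\vx^{[i]}}(\param) \cdot \nabla_{\vz^{[i]}} \log\!\left(1-\langle \smax(\vz^{[i]}), \vq^{[i]}\rangle\right),
\end{equation*}
so the task reduces to identifying the inner gradient in $\R^C$ with $\vg^{[i]}$.

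For the inner gradient, I would use the standard softmax Jacobian identity $\partial \vp^{[i]}_k/\partial \vz^{[i]}_c = \vp^{[i]}_k(\delta_{kc} - \vp^{[i]}_c)$ together with $\partial \log(1-x)/\partial x = -1/(1-x)$. Writing $s^{[i]} := \langle \vp^{[i]}, \vq^{[i]}\rangle$ for brevity, this gives
\begin{equation*}
\frac{\partial}{\partial \vz^{[i]}_c} \log(1 - s^{[i]}) = \frac{-1}{1 - s^{[i]}}\sum_{k=1}^C \vq^{[i]}_k\, \vp^{[i]}_k(\delta_{kc} - \vp^{[i]}_c) = \frac{-1}{1 - s^{[i]}}\left(\vq^{[i]}_c\, \vp^{[i]}_c - \vp^{[i]}_c\, s^{[i]}\right).
\end{equation*}
Factoring $\vp^{[i]}_c$ and using $\sum_k \vp^{[i]}_k = 1$ to rewrite $s^{[i]} - \vq^{[i]}_c = \sum_k(\vq^{[i]}_k - \vq^{[i]}_c)\vp^{[i]}_k$, the right-hand side becomes exactly $\vg^{[i]}_c$ as defined in the lemma. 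Substituting back into the chain-rule expression and summing over $i$ completes the argument.

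I do not anticipate a real obstacle here: the result is essentially a bookkeeping exercise with the softmax Jacobian. The only mildly delicate step is the algebraic manipulation that turns the factor $s^{[i]} - \vq^{[i]}_c$ into the symmetric-looking sum $\sum_k(\vq^{[i]}_k - \vq^{[i]}_c)\vp^{[i]}_k$ by inserting the identity $\sum_k \vp^{[i]}_k = 1$; everything else is routine.
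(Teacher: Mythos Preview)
Your proposal is correct and follows essentially the same approach as the paper: split off the cross-entropy gradient via~\eqref{eq:ce_grad}, then compute the gradient of the regularizer through the softmax by the chain rule to arrive at $\vp_c(s-\vq_c)/(1-s)$ and rewrite $s-\vq_c=\sum_k(\vq_k-\vq_c)\vp_k$. The only cosmetic difference is that the paper expands the softmax explicitly with exponentials and the quotient rule rather than invoking the Jacobian identity $\partial \vp_k/\partial \vz_c = \vp_k(\delta_{kc}-\vp_c)$ directly; the computation is otherwise identical.
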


\begin{figure}[t]
\hspace{-0.4cm}
    \begin{tabular}{>{\centering\arraybackslash}m{0.31\linewidth} >{\centering\arraybackslash}m{0.31\linewidth} >{\centering\arraybackslash}m{0.31\linewidth}}
    \quad\quad$\vp_{c^{\ast}}^{[i]} - \vy_{c^{\ast}}^{[i]} + \lambda \vg_{c^{\ast}}^{[i]}$ & \quad\quad Clean labels & \quad\quad Wrong labels  \\
      \includegraphics[width=1.1\linewidth]{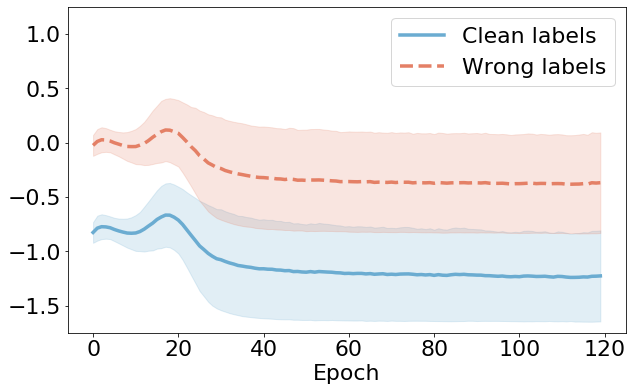}
      &
     \includegraphics[width=1.1\linewidth]{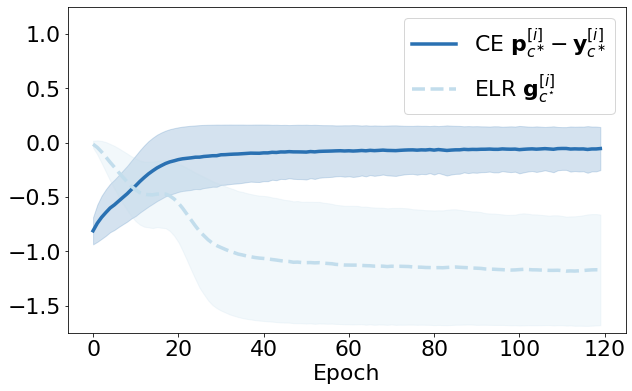}&
     \includegraphics[width=1.1\linewidth]{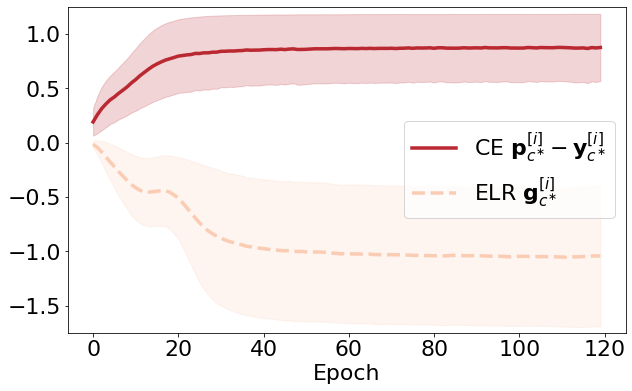}
    \end{tabular}
    \caption{
    Illustration of the effect of the regularization on the gradient of the ELR loss (see Lemma~\ref{lemma:ELR_gradient}) for the same deep-learning model as in Figure~\ref{fig:CE_vs_ELR}. On the left, we plot the entry of $\vp^{[i]} - \vy^{[i]} + \lambda \vg^{[i]}$ corresponding to the true class, denoted by $c^{\ast}$, for training examples with clean (blue) and wrong (red) labels. The center image shows the $c^{\ast}$th entry of the cross-entropy (CE) term $\vp^{[i]} - \vy^{[i]}$ (dark blue) and the regularization term $\vg^{[i]}$ (light blue) separately for the examples with clean labels. During early learning the CE term dominates, but afterwards it vanishes as the model learns the clean labels (i.e. $\vp^{[i]} \approx \vy^{[i]}$). However, the regularization term compensates for this, forcing the model to continue learning mainly on the examples with clean labels. On the right, we show the CE and the regularization term (dark and light red respectively) separately for the examples with wrong labels. The regularization cancels out the CE term, preventing memorization. In all plots the curves represent the mean value, and the shaded regions are within one standard deviation of the mean.
    }
    \label{fig:Gradient_simple}
\end{figure}

In words, the sign of $\vg^{[i]}_c$ is determined by a weighted combination of the difference between $\vq_c^{[i]}$ and the rest of the entries in the target. 

If $c^{\ast}$ is the true class, then the $c^{\ast}$th entry of $\vq^{[i]}$ tends to be dominant during early-learning. In that case, the $c^{\ast}$th entry of $\vg^{[i]}$ is negative.
This is useful both for examples with clean labels and for those with wrong labels.
For examples with clean labels, the cross-entropy term $\vp^{[i]} - \vy^{[i]}$ tends to vanish after the early-learning stage because $\vp^{[i]}$ is very close to $\vy^{[i]}$, allowing examples with wrong labels to dominate the gradient. Adding $\vg^{[i]}$ counteracts this effect by ensuring that the magnitudes of the coefficients on examples with clean labels remains large. The center image of Figure~\ref{fig:Gradient_simple} shows this effect. For examples with wrong labels, the cross entropy term $\vp^{[i]}_{c^\ast} - \vy_{c^\ast}^{[i]}$ is positive because $\vy_{c^\ast}^{[i]} = 0$. Adding the negative term $\vg^{[i]}_{c^{\ast}}$ therefore dampens the coefficients on these mislabeled examples, thereby diminishing their effect on the gradient (see right image in Figure~\ref{fig:Gradient_simple}). Thus, ELR fulfils the two desired properties outlined at the end of Section~\ref{sec:linear}: boosting the gradient of examples with clean labels, and neutralizing the gradient of the examples with false labels.

\subsection{Target estimation}
\label{sec:targets}
ELR requires a target probability for each example in the training set. The target can be set equal to the model output, but using a running average is more effective. In semi-supervised learning, this technique is known as temporal ensembling~\citep{laine2016temporal}. Let $\vq^{[i]}(k)$ and $\vp^{[i]}(k)$ denote the target and model output respectively for example $i$ at iteration $k$ of training. We set
\begin{align}
    \vq^{[i]}(k) := \beta \vq^{[i]}(k-1) + (1-\beta) \vp^{[i]}(k),
    \label{eq:moving_average}
\end{align}
where $0 \leq \beta < 1$ is the momentum. The basic version of our proposed method alternates between computing the targets and minimizing the cost function~\eqref{eq:ELR} via stochastic gradient descent.

Target estimation can be further improved in two ways. First, by using the output of a model obtained through a running average of the model weights during training. In semi-supervised learning, this \emph{weight averaging} approach has been proposed to mitigate confirmation bias~\citep{tarvainen2017mean}. Second, by using two separate neural networks, where the target of each network is computed from the output of the other network. The approach is inspired by Co-teaching and related methods~\citep{Han2018CoteachingRT,Yu2019HowDD,li2020dividemix}. The ablation results in Section~\ref{sec:results} show that weight averaging, two networks, and mixup data augmentation~\citep{zhang2017mixup} all separately improve performance. We call the combination of all these elements ELR+. A detailed description of ELR and ELR+ is provided in Section~\ref{sec:algorithms} of the supplementary material.

\begin{table}
\footnotesize
\begin{center}

\begin{threeparttable}
\resizebox{\linewidth}{!}{
\begin{tabular}{c|c|cccc|cccc}
\toprule
\multirow{2}{*}{\shortstack{Datasets\\(Architecture)}} & \multirow{2}{*}{Methods} & \multicolumn{4}{c|}{\textit{Symmetric label noise}}&\multicolumn{4}{c}{\textit{Asymmetric label noise}} \\
 & & 20\% & 40\%  & 60\% & 80\%  & 10\% & 20\% & 30\% & 40\%\\
\midrule
\multirow{7}{*}{\shortstack{CIFAR10\\(ResNet34)}}  & Cross entropy & 86.98 $\pm$ 0.12 & 81.88 $\pm$ 0.29& 74.14 $\pm$ 0.56 & 53.82 $\pm$ 1.04 & 90.69 $\pm$ 0.17 & 88.59 $\pm$ 0.34 & 86.14 $\pm$ 0.40 & 80.11 $\pm$ 1.44 \\
   & Bootstrap~\cite{Reed2015TrainingDN}& 86.23 $\pm$ 0.23 & 82.23 $\pm$ 0.37  & 75.12 $\pm$ 0.56 & 54.12 $\pm$ 1.32 & 90.32 $\pm$ 0.21 & 88.26 $\pm$ 0.24 & 86.57 $\pm$ 0.35 & 81.21 $\pm$ 1.47\\
 & Forward~\cite{patrini2017making}& 87.99 $\pm$ 0.36 & 83.25 $\pm$ 0.38 & 74.96 $\pm$ 0.65 & 54.64 $\pm$ 0.44 & 90.52 $\pm$ 0.26 & 89.09 $\pm$ 0.47 & 86.79 $\pm$ 0.36& 83.55 $\pm$ 0.58\\
  & GSE~\cite{zhang2018generalized}& 89.83 $\pm$ 0.20 & 87.13 $\pm$ 0.22 & 82.54 $\pm$ 0.23 & 64.07 $\pm$ 1.38 & 90.91 $\pm$ 0.22 & 89.33 $\pm$ 0.17 & 85.45 $\pm$ 0.74 & 76.74 $\pm$ 0.61\\
& SL~\cite{Wang2019SymmetricCE} & 89.83 $\pm$ 0.32 & 87.13 $\pm$ 0.26 & 82.81 $\pm$ 0.61 & 68.12 $\pm$ 0.81 & 91.72 $\pm$ 0.31 & 90.44 $\pm$ 0.27 & 88.48 $\pm$ 0.46 & 82.51 $\pm$ 0.45\\
 & ELR & \textbf{91.16 $\pm$ 0.08} & \textbf{89.15 $\pm$ 0.17 } & \textbf{86.12 $\pm$ 0.49} & \textbf{73.86 $\pm$ 0.61}  &\textbf{93.27 $\pm$ 0.11} & \textbf{93.52 $\pm$ 0.23} & \textbf{91.89 $\pm$ 0.22} & \textbf{90.12 $\pm$ 0.47} \\
  & ELR\tnote{$\star$} & \textbf{ 92.12 $\pm$ 0.35} & \textbf{91.43 $\pm$ 0.21} & \textbf{88.87 $\pm$ 0.24} & \textbf{80.69 $\pm$ 0.57}& \textbf{94.57 $\pm$ 0.23} & \textbf{93.28 $\pm$ 0.19} & \textbf{92.70 $\pm$ 0.41} & \textbf{90.35 $\pm$ 0.38} \\
\midrule
\multirow{7}{*}{\shortstack{CIFAR100\\(ResNet34)}} & Cross entropy & 58.72 $\pm$ 0.26 & 48.20 $\pm$ 0.65 & 37.41 $\pm$ 0.94 & 18.10 $\pm$ 0.82 &  66.54 $\pm$ 0.42 & 59.20 $\pm$ 0.18 & 51.40 $\pm$ 0.16 & 42.74 $\pm$ 0.61 \\
 & Bootstrap~\cite{Reed2015TrainingDN}& 58.27 $\pm$ 0.21 & 47.66 $\pm$ 0.55 & 34.68 $\pm$ 1.1 & 21.64 $\pm$ 0.97 & 67.27 $\pm$ 0.78 & 62.14 $\pm$ 0.32 & 52.87 $\pm$ 0.19 & 45.12 $\pm$ 0.57\\
  & Forward~\cite{patrini2017making} & 39.19 $\pm$ 2.61 & 31.05 $\pm$ 1.44 & 19.12 $\pm$ 1.95 & 8.99 $\pm$ 0.58 & 45.96 $\pm$ 1.21 & 42.46 $\pm$ 2.16 & 38.13 $\pm$ 2.97 & 34.44 $\pm$ 1.93\\
  & GSE~\cite{zhang2018generalized} & 66.81 $\pm$ 0.42 & 61.77 $\pm$ 0.24 & 53.16 $\pm$ 0.78 & 29.16 $\pm$ 0.74 & 68.36 $\pm$ 0.42 & 66.59 $\pm$ 0.22 & 61.45 $\pm$ 0.26 & 47.22 $\pm$ 1.15\\
  & SL~\cite{Wang2019SymmetricCE} & 70.38 $\pm$ 0.13 & 62.27 $\pm$ 0.22 & 54.82 $\pm$ 0.57 & 25.91 $\pm$ 0.44 & 73.12 $\pm$ 0.22 & 72.56 $\pm$ 0.22 &72.12 $\pm$ 0.24 & 69.32 $\pm$ 0.87\\
  & ELR  & \textbf{74.21 $\pm$ 0.22} & \textbf{68.28 $\pm$ 0.31} & \textbf{59.28 $\pm$ 0.67} & \textbf{29.78 $\pm$ 0.56} & 
                     \textbf{74.20 $\pm$ 0.31} & \textbf{74.03 $\pm$ 0.31} & \textbf{73.71 $\pm$ 0.22} & \textbf{73.26 $\pm$ 0.64}\\
  & ELR\tnote{$\star$}  & \textbf{74.68 $\pm$ 0.31} & \textbf{68.43 $\pm$ 0.42} & \textbf{60.05 $\pm$ 0.78} & \textbf{30.27 $\pm$ 0.86} & 
                      \textbf{74.52 $\pm$ 0.32} & \textbf{74.20 $\pm$ 0.25} & \textbf{74.02 $\pm$ 0.33} & \textbf{73.73 $\pm$ 0.34}\\
\bottomrule
\end{tabular}}
\begin{tablenotes}\footnotesize
\item[$\star$] \textit{Results with cosine annealing learning rate. }
\end{tablenotes}
\end{threeparttable}
\end{center}

\caption{Comparison with state-of-the-art methods on CIFAR-10 and CIFAR-100 with symmetric and asymmetric label noise. The bootstrap and SL methods were reimplemented using publicly available code, the rest of results are taken from~\cite{zhang2018generalized}. The mean accuracy and its standard deviation are computed over five noise realizations.}
\label{tab:results_loss_correction_methods}
\vspace{-5mm}
\end{table}

\section{Experiments}
We evaluate the proposed methodology on two standard benchmarks with simulated label noise, CIFAR-10 and CIFAR-100~\citep{krizhevsky2009learning}, and two real-world datasets, Clothing1M~\citep{xiao2015learning} and WebVision~\citep{li2017webvision}. For CIFAR-10 and CIFAR-100 we simulate label noise by randomly flipping a certain fraction of the labels in the training set following a \textit{symmetric} uniform distribution (as in Eq.~(\ref{eq:sym_noise})), as well as a more realistic \textit{asymmetric} class-dependent distribution, following the scheme proposed in~\citep{patrini2017making}. Clothing1M consists of 1 million training images collected from online shopping websites with labels generated using surrounding text. Its noise level is estimated at $38.5\%$~\citep{song2019prestopping}. For ease of comparison to previous works~\citep{Jiang2018MentorNetLD,Chen2019UnderstandingAU}, we consider the mini WebVision dataset which contains the top 50 classes from the Google image subset of WebVision, which results in approximately 66 thousand images.
The noise level of WebVision is estimated at $20\%$~\citep{li2017webvision}. Table~\ref{tab:data_describ} in the supplementary material reports additional details about the datasets, and our training, validation and test splits. 

In our experiments, we prioritize making our results comparable to the existing literature. When possible we use the same preprocessing, and architectures as previous methods. The details are described in Section~\ref{sec:experiments_appendix} of the supplementary material. We focus on two variants of the proposed approach: ELR with temporal ensembling, which we call ELR, and ELR with temporal ensembling, weight averaging, two networks, and mixup data augmentation, which we call ELR+ (see Section~\ref{sec:algorithms}). The choice of hyperparameters is performed on separate validation sets. Section~
\ref{sec:hyperparameters} shows that the sensitivity to different hyperparameters is quite low. Finally, we also perform an ablation study on CIFAR-10 for two levels of symmetric noise (40\% and 80\%) in order to evaluate the contribution of the different elements in ELR+. Code to reproduce the experiments is publicly available online at \url{https://github.com/shengliu66/ELR}.

\vspace{-1mm}
\section{Results}
\label{sec:results}
\begin{table}
\begin{threeparttable}[t]
\resizebox{\linewidth}{!}{
\begin{tabular}{c|c|c|c|c|c|c|c|c|c|c}
\toprule
& & &
Cross entropy & 
Co-teaching+~\citep{Yu2019HowDD} & 
Mixup~\citep{zhang2017mixup} &
PENCIL~\citep{yi2019probabilistic} & 
MD-DYR-SH~\citep{Arazo2019unsup} &
DivideMix~\citep{li2020dividemix} & 
ELR+ &
ELR+\tnote{$\ast$}\\ 
\midrule
\multirow{5}{*}{CIFAR-10}&\multirow{4}{*}{\shortstack{Sym.\\label\\ noise}
}&{\large 20\%} &  {\large 86.8} &  {\large 89.5} & {\large 95.6} & {\large 92.4} & {\large 94.0} & {\large \textbf{96.1}} & {\large 94.6}
&{\large {95.8}}\\
&&{\large 50\%} & {\large 79.4} & {\large 85.7} & {\large 87.1} & {\large 89.1} & {\large 92.0} & {\large {94.6}} & {\large 93.8} &{\large \textbf{94.8}} \\
&&{\large 80\%} & {\large 62.9}  & {\large 67.4} & {\large 71.6} & {\large 77.5} & {\large 86.8} & {\large {93.2}} & {\large 91.1}
&{\large \textbf{93.3}}\\
&&{\large 90\%} & {\large 42.7}  & {\large 47.9} & {\large 52.2} & {\large 58.9} & {\large 69.1} & {\large {76.0}} & {\large 75.2}
&{\large \textbf{78.7}}\\
\cmidrule{2-11}
&{Asym.}&{\large 40\%} & {\large 83.2}  & {-} & {-} & {\large 88.5} & {\large 87.4} & {\large \textbf{93.4}} & {\large 92.7} & {\large 93.0}\\
\bottomrule
\toprule
\multirow{5}{*}{CIFAR-100}&\multirow{4}{*}{\shortstack{Sym.\\label\\ noise}
}&{\large 20\%} &  {\large 62.0} &  {\large 65.6} & {\large 67.8} & {\large 69.4} & {\large 73.9} & {\large 77.3} & {\large {77.5}} & {\large \textbf{77.6}}\\
&&{\large 50\%} & {\large 46.7} & {\large 51.8} &
{\large 57.3}&
{\large 57.5} & {\large 66.1} &  {\large \textbf{74.6}} & {\large 72.4} &  {\large {73.6}}\\
&&{\large 80\%} & {\large 19.9}  & {\large 27.9} & {\large 30.8} & {\large 31.1} & {\large 48.2} & {\large 60.2} & {\large 58.2} & {\large \textbf{60.8}}\\
&&{\large 90\%} & {\large 10.1}  & {\large 13.7} & {\large 14.6} & {\large 15.3} & {\large 24.3} & {\large 31.5} & {\large 30.8} & {\large \textbf{33.4}}\\
\cmidrule{2-11}
&{Asym.}&{\large 40\%} & {\large -}  & {-} & {-} & {\large -} & {\large -} & {\large 72.1 } & {\large {76.5}} &{\large \textbf{77.5}}\\
\bottomrule
\end{tabular}}
\end{threeparttable}
\vspace{0.5mm}
\caption{Comparison with state-of-the-art methods 
on CIFAR-10 and CIFAR-100 with symmetric and asymmetric noise. For ELR+, we use 10\% of the training set for validation, and treat the validation set as a held-out test set. The result for DivideMix on CIFAR-100 with 40\% asymmetric noise was obtained using publicly available code. The rest of the results are taken from~\citep{li2020dividemix}, which reports the highest accuracy observed on the validation set during training. We also report the performance of ELR+ under this metric on the rightmost column (ELR+$^{\ast}$).}
    \label{tab:Results_ELR_plus_CIFAR10}
\vspace{-1.5mm}
\end{table}

Table~\ref{tab:results_loss_correction_methods} evaluates the performance of ELR on CIFAR-10 and CIFAR-100 with different levels of symmetric and asymmetric label noise. We compare to the best performing methods that only modify the training loss. All techniques use the same architecture (ResNet34), batch size, and training procedure. ELR consistently outperforms the rest by a significant margin. To illustrate the influence of the training procedure, we include results with a different learning-rate scheduler (cosine annealing~\citep{Loshchilov2017SGDRSG}), which further improves the results. 

In Table~\ref{tab:Results_ELR_plus_CIFAR10}, we compare ELR+ to state-of-the-art methods, which also apply sample selection and data augmentation, on CIFAR-10 and CIFAR-100. All methods use the same architecture (\textbf{PreAct ResNet-18}). The results from other methods may not be completely comparable to ours because they correspond to the best test performance during training, whereas we use a separate validation set. Nevertheless, ELR+ outperforms all other methods except DivideMix.

\begin{table}[!h]
\resizebox{\linewidth}{!}{
\begin{tabular}{c|c|c|c|c|c|c|c}
\toprule
CE&
Forward~\citep{patrini2017making} & 
GCE~\citep{zhang2018generalized} &
SL~\citep{Wang2019SymmetricCE} &
Joint-Optim~\citep{Tanaka2018JointOF} &
DivideMix~\citep{li2020dividemix} &
ELR  &
ELR+ \\
\midrule
 69.10 &  69.84 &  69.75 &  71.02 &  72.16  &  74.76 &  72.87 &  \textbf{74.81}\\
\bottomrule
\end{tabular}}
\vspace{0.5mm}
\caption{Comparison with state-of-the-art methods in test accuracy (\%) on Clothing1M. All methods use a ResNet-50 architecture pretrained on ImageNet. Results of other methods are taken from the original papers (except for GCE, which is taken from~\citep{Wang2019SymmetricCE}).}
\vspace{-1.5mm}
\label{tab:clothing_results}
\end{table}
Table~\ref{tab:clothing_results} compares ELR and ELR+ to state-of-the-art methods on the Clothing1M dataset. ELR+ achieves state-of-the-art performance, slightly superior to DivideMix. 

\begin{table}[!h]
\resizebox{\linewidth}{!}{
	\begin{tabular}	{c|c|c|c|c|c|c|c|c}
		\toprule	 	
			& & 
			  D2L~\citep{Ma2018DimensionalityDrivenLW} &
			  MentorNet~\citep{Jiang2018MentorNetLD}&
			  Co-teaching~\citep{Han2018CoteachingRT} &	
			  Iterative-CV~\citep{Wang2018IterativeLW}	&	
			  DivideMix~\citep{li2020dividemix}&
			  ELR &
			  ELR+ \\
			  \midrule			
		\multirow{2}{*}{{\large WebVision}} & {\large top1} &  {\large 62.68} & {\large 63.00} & {\large 63.58} & {\large 65.24} & {\large 77.32} & {\large 76.26} & {\large \textbf{77.78}}\\
		& {\large top5}  & {\large 84.00} & {\large 81.40} & {\large 85.20} & {\large 85.34} & {\large 91.64} & {\large 91.26} & {\large\textbf{91.68}}\\
		 \midrule	
		 \multirow{2}{*}{{\large ILSVRC12}} & {\large top1}  & {\large 57.80} & {\large 57.80} & {\large 61.48} & {\large 61.60} & {\large \textbf{75.20}} & {\large 68.71} & {\large 70.29}\\
		& {\large top5}  & {\large 81.36} & {\large 79.92} & {\large 84.70} & {\large 84.98} & {\large\textbf{90.84}} & {\large 87.84} & {\large 89.76}\\

		\bottomrule
	\end{tabular}}
	\vspace{0.5mm}
    \caption{Comparison with state-of-the-art methods trained on the mini WebVision dataset. Results of other methods are taken from~\citep{li2020dividemix}. All methods use an InceptionResNetV2 architecture.
    \vspace{-1.5mm}}
    \label{tab:webvision_results}
\end{table}	
Table~\ref{tab:webvision_results} compares ELR and ELR+ to state-of-the-art methods trained on the mini WebVision dataset and evaluated on both the WebVision and ImageNet ILSVRC12 validation sets. ELR+ achieves state-of-the-art performance, slightly superior to DivideMix, on WebVision. ELR also performs strongly, despite its simplicity. On ILSVRC12 DivideMix produces superior results (particularly in terms of top1 accuracy).

\begin{table}[!h]
\begin{center}
\resizebox{0.85\linewidth}{!}{
\begin{tabular}{c c c c c cc}
\toprule
&  &  & \multicolumn{2}{c}{40\%} & \multicolumn{2}{c}{80\%}\\
\cmidrule{4-7}
&  &  & \multicolumn{2}{c}{Weight Averaging} & \multicolumn{2}{c}{Weight Averaging}\\
\cmidrule{4-7}
&  &  & \cmark & \xmark & \cmark & \xmark\\
\midrule[0.7pt]
\multirow{2}{*}{1 Network} &  \multirow{2}{*}{{mixup}} & \cmark & 93.04 $\pm$ 0.12 & 91.05 $\pm$ 0.13 & 87.23 $\pm$ 0.30 & 81.43 $\pm$ 0.52 \\
 \cmidrule{3-7}
& &\xmark & 92.09 $\pm$ 0.08 & 90.83 $\pm$ 0.07 & 76.50 $\pm$ 0.65 & 72.54 $\pm$ 0.35\\
\cmidrule[0.7pt]{1-7}
\multirow{2}{*}{2 Networks} &\multirow{2}{*}{{mixup}}& \cmark & 93.68 $\pm$ 0.51 & 93.51 $\pm$ 0.47 &  88.62 $\pm$ 0.26 & 84.75 $\pm$ 0.26 \\
\cmidrule{3-7}
& &\xmark & 92.95 $\pm$ 0.05 & 91.86 $\pm$ 0.14 & 80.13 $\pm$ 0.51 & 73.49 $\pm$ 0.47\\
\bottomrule
\end{tabular}}

\end{center}
\caption{Ablation study evaluating the influence of weight averaging, the use of two networks, and mixup data augmentation for the CIFAR-10 dataset with medium (40\%) and high (80\%) levels of symmetric noise. The mean accuracy and its standard deviation are computed over five noise realizations.
}
\label{tab:ablation_40}
\vspace{-2mm}
\end{table}
\vspace{-3mm}
Table~\ref{tab:ablation_40} shows the results of an ablation study evaluating the influence of the different elements of ELR+ for the CIFAR-10 dataset with medium (40\%) and high (80\%) levels of symmetric noise. Each element seems to provide an independent performance boost. At the medium noise level the improvement is modest, but at the high noise level it is very significant. This is in line with recent works showing the effectiveness of semi-supervised learning techniques in such settings~\citep{Arazo2019unsup,li2020dividemix}.

\section{Discussion and Future Work}
In this work we provide a theoretical characterization of the early-learning and memorization phenomena for a linear generative model, and build upon the resulting insights to propose a novel framework for learning from data with noisy annotations. Our proposed methodology yields strong results on standard benchmarks and real-world datasets for several different network architectures. However, there remain multiple open problems for future research. On the theoretical front, it would be interesting to bridge the gap between linear and nonlinear models (see~\citep{li2019gradient} for some work in this direction), and also to investigate the dynamics of the proposed regularization scheme. On the methodological front, we hope that our work will trigger interest in the design of new forms of regularization that provide robustness to label noise. 

\section{Broader Impact}

This work has the potential to advance the development of machine-learning methods that can be deployed in contexts where it is costly to gather accurate annotations. This is an important issue in applications such as medicine, where machine learning has great potential societal impact. 

\subsubsection*{Acknowledgments}
This research was supported by NSF NRT-HDR Award 1922658. SL was partially supported by NSF grant DMS 2009752. CFG was partially supported by NSF Award HDR-1940097. JNW gratefully acknowledges the support of the Institute for Advanced Study, where a portion of this research was conducted.

\medskip

\small
\bibliography{ref}
\bibliographystyle{plain}

\newpage

\appendix
\numberwithin{figure}{section}
\numberwithin{table}{section}
\section{Theoretical analysis of early learning and memorization in a linear model}
In this section, we formalize and substantiate the claims of Theorem~\ref{thm:main}.

Theorem~\ref{thm:main} has three parts, which we address in the following sections.
First, in Section~\ref{sec:early_learning_succeeds}, we show that the classifier makes progress during the early-learning phase: over the first $T$ iterations, the gradient is well correlated with $\vv$ and the accuracy on mislabeled examples increases.
However, as noted in the main text, this early progress halts because the gradient terms corresponding to correctly labeled examples begin to disappear.
We prove this rigorously in Section~\ref{sec:gradients}, which shows that the overall magnitude of the gradient terms corresponding to correctly labeled examples shrinks over the first $T$ iterations.
Finally, in Section~\ref{sec:memorization}, we prove the claimed asymptotic behavior: as $t \to \infty$, gradient descent perfectly memorizes the noisy labels.

\subsection{Notation and setup}
We consider a softmax regression model parameterized by two weight vectors $\param_1$ and $\param_2$, which are the rows of the parameter matrix $\param\in \R^{2\times p}$. In the linear case this is equivalent to a logistic regression model, because the cross-entropy loss on two classes depends only on the vector $\param_1 - \param_2$. 
If we reparametrize the labels as
\begin{equation*}
    \pmlabels^{[i]} = \left\{\begin{array}{ll}
    1 & \text{ if $\vy^{[i]}_1 = 1$} \\
    -1 & \text{ if $\vy^{[i]}_2 = 1$}\,,
    \end{array}\right.
\end{equation*}
and set $\theta := \param_1 - \param_2$, we can then write the loss as
\begin{equation*}
    \mathcal L_\text{CE}(\theta) = \frac 1n \sum_{i=1}^n  \log(1 + e^{-\pmlabels^{[i]}\theta^\top \vx^{[i]}})\,.
\end{equation*}
We write $\pmlabelstrue$ for the true cluster assignments: $(\pmlabelstrue)^{[i]} = 1$ if $\vx^{[i]}$ comes from the cluster with mean $+\vv$, and $(\pmlabelstrue)^{[i]} = -1$ otherwise.
Note that, with this convention, we can always write
$\vx^{[i]} = (\pmlabelstrue)^{[i]}(\vv - \sigma \vz^{[i]})$, where $\vz^{[i]}$ is a standard Gaussian random vector independent of all other random variables.

In terms of $\theta$ and $\pmlabels$, the gradient~\eqref{eq:linear_gradient} reads
\begin{align}
\nabla \mathcal{L}_\text{CE}(\theta) & = \frac{1}{2n}\sum_{i=1}^n \vx^{[i]} \left(\tanh(\theta^\top \vx^{[i]}) - \pmlabels^{[i]}\right), 
\end{align}
As noted in the main text, the coefficient $\tanh(\theta^\top \vx^{[i]}) - \pmlabels^{[i]}$ is the key quantity governing the properties of the gradient.

Let us write $\correct$ for the set of indices for which the labels are correct, and $W$ for the set of indices for which labels are wrong.

We assume that $\theta_0$ is initialized randomly on the sphere with radius $2$, and then optimized to minimize $\mathcal L$ via gradient descent with fixed step size $\eta < 1$. We denote the iterates by $\theta_t$.

We consider the asymptotic regime where $\sigma \ll 1$ and $\Delta$ are constants and $p, n \to \infty$, with $p/n \in (1 - \Delta/2, 1)$.
We will let $\sigma_\Delta$ denote a constant, whose precise value may change from proposition to proposition; however, in all cases the requirements on $\sigma$ will be \emph{independent} of $p$ and $n$.
For convenience, we assume that $\Delta \leq 1/2$, though it is straightforward to extend the analysis below to any $\Delta$ bounded away from $1$.
Note that when $\Delta = 1$, each observed label is independent of the data, so no learning is possible.
We will use the phrase ``with high probability'' to denote an event which happens with probability $1 - o(1)$ as $n, p \to \infty$, and we use $o_P(1)$ to denote a random quantity which converges to $0$ in probability.
We use the symbol $c$ to refer to an unspecified positive constant whose value may change from line to line.
We use subscripts to indicate when this constant depends on other parameters of the problem.

We let $T$ be the smallest positive integer such that $\theta_T^\top \vv \geq 1/10$.
By Lemmas~\ref{lem:init} and~\ref{lem:norm} in Section~\ref{sec:lemmas}, $T = \Omega(1/\eta)$ with high probability.

\subsection{Early-learning succeeds}\label{sec:early_learning_succeeds}
We first show that, for the first $T$ iterations, the negative gradient $- \nabla \mathcal L_{\text{CE}}(\theta_t)^\top$ has constant correlation with $\vv$.
(Note that, by contrast, a \emph{random} vector in $\R^p$ typically has \emph{negligible} correlation with $\vv$.)
\begin{proposition}\label{prop:correlation}
There exists a constant $\sigma_\Delta$, depending only on $\Delta$, such that if $\sigma \leq \sigma_\Delta$ then with high probability, for all $t < T$, we have $\|\theta_t - \theta_0\| \leq 1$ and
\begin{equation*}
- \nabla \mathcal L_{\text{CE}}(\theta_t)^\top \vv/\|\nabla \mathcal L_{\text{CE}}(\theta_t)\| \geq 1/6\,.
\end{equation*}
\end{proposition}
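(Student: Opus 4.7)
The plan is to argue by induction on $t$. Assuming $\|\theta_s - \theta_0\| \leq 1$ for all $s \leq t$ (and therefore, by the definition of $T$, $\theta_s^\top \vv < 1/10$ for all such $s$), I would first establish the correlation bound at $\theta_t$ and then close the induction by bounding $\|\theta_{t+1} - \theta_0\|$. The key decomposition is
\[
-\nabla \mathcal{L}_\text{CE}(\theta) \;=\; G_0 \;+\; G_1(\theta),\qquad G_0 := \tfrac{1}{2n}\textstyle\sum_i \pmlabels^{[i]}\vx^{[i]},\quad G_1(\theta) := -\tfrac{1}{2n}\textstyle\sum_i \tanh(\theta^\top \vx^{[i]})\vx^{[i]}.
\]
$G_0$ carries the label signal and is independent of $\theta$, whereas $G_1$ is the softmax correction, which I expect to be small whenever $\theta^\top \vv$ is small. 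The $\vv$-component and the perpendicular component of each piece are analyzed separately.

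For $G_0$, substituting $\vx^{[i]} = \pmlabelstrue^{[i]}(\vv - \sigma \vz^{[i]})$ gives $G_0 = \tfrac{\bar\alpha}{2}\vv - \tfrac{\sigma}{2n}\sum_i \alpha^{[i]}\vz^{[i]}$ with $\alpha^{[i]} := \pmlabels^{[i]}\pmlabelstrue^{[i]} \in \{\pm 1\}$ independent of $\vz^{[i]}$ and $\EE \alpha^{[i]} = 1 - \Delta \geq 1/2$. Hoeffding then gives $\bar\alpha \geq 1/2 - o(1)$ w.h.p.; and because $\alpha^{[i]}\vz^{[i]} \stackrel{d}{=} \vz^{[i]}$, the noise term is distributed as $\tfrac{\sigma}{2\sqrt n}\vz$ for a standard Gaussian $\vz \in \R^p$, whose norm is $O(\sigma\sqrt{p/n}) = O(\sigma)$. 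Hence $\vv^\top G_0 \geq (1-\Delta)/2 - o(1)$ and $\|G_{0,\perp}\| = O(\sigma)$.

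For $G_1(\theta_t)$, the key observation is the identity $\tanh(\pmlabelstrue^{[i]} u) = \pmlabelstrue^{[i]}\tanh(u)$, which cancels the $\pmlabelstrue^{[i]}$ signs between $\vx^{[i]}$ and $\tanh(\theta^\top \vx^{[i]})$. Using $|\tanh u| \leq |u|$ together with the inductive hypothesis ($\|\theta_t\| \leq 3$ and $|\theta_t^\top \vv| < 1/10 + o(1)$) and standard concentration for empirical averages such as $\tfrac{1}{n}\sum_i |\vv^\top \vz^{[i]}| = O(1)$ and $\tfrac{1}{n}\sum_i |\theta_t^\top \vz^{[i]}| = O(\|\theta_t\|)$, the $\vv$-projection of $G_1$ is bounded by $1/20 + C\sigma$. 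For the perpendicular component, combining $\|\tanh(X\theta_t)\|_2 \leq \|X\theta_t\|_2$ with the operator-norm estimate $\|\tfrac{1}{n} X^\top X - \vv\vv^\top\| = O(\sigma + \sigma^2)$ -- a consequence of Marchenko--Pastur applied to $\tfrac{\sigma^2}{n}\sum_i \vz^{[i]}\vz^{[i]\top}$ and Gaussian concentration for the cross terms -- yields $\|G_{1,\perp}(\theta_t)\| = O(\sigma)$. Putting the pieces together, $\vv^\top(-\nabla \mathcal L_\text{CE}(\theta_t)) \geq (1-\Delta)/2 - 1/20 - O(\sigma) \geq 3/20 - O(\sigma)$ while $\|(-\nabla \mathcal L_\text{CE})_\perp\| = O(\sigma)$, so the correlation with $\vv$ is at least $1/6$ once $\sigma_\Delta$ is chosen small enough.

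The norm bound is then closed by coordinate bookkeeping: the $\vv$-component of $\theta_{t+1} - \theta_0$ is at most $|\theta_{t+1}^\top \vv| + |\theta_0^\top \vv| \leq 1/10 + O(1/\sqrt p)$ by the definition of $T$ and the fact that $\theta_0^\top \vv = O_P(1/\sqrt p)$ for uniform initialization on the sphere, while the perpendicular displacement telescopes: $\eta \sum_{s \leq t} \|(-\nabla\mathcal L_\text{CE}(\theta_s))_\perp\| = O(\eta T \sigma) = O(\sigma)$, where the matching upper bound $T = O(1/\eta)$ follows from the $\vv$-component estimate (the $\vv$-coordinate of $\theta_t$ grows by at least $\eta \cdot 3/20$ each step). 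Summing, $\|\theta_{t+1} - \theta_0\| \leq 1/10 + O(\sigma) + O(1/\sqrt p) \leq 1$ for $\sigma_\Delta$ small and $p$ large. \textbf{The main obstacle} is making the perpendicular bound on $G_1(\theta_t)$ genuinely linear in $\sigma$ rather than $\sigma\sqrt p$: the sum contains $n \sim p$ terms of individual norm $\sigma\sqrt p$, so cancellation is essential. This cancellation is precisely what Marchenko--Pastur provides in the regime $p/n \leq 1$; without the sharper inequality $|\tanh u| \leq |u|$ (the crude $|\tanh u| \leq 1$ will not do) or without the spectral estimate on $\tfrac{1}{n}X^\top X$, one picks up a $\sigma\sqrt p$ factor that would swamp the constant-order signal.
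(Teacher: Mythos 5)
Your proposal is correct and rests on the same skeleton as the paper's proof: the split of $-\nabla\mathcal L_{\text{CE}}$ into the label-signal term $G_0$ and the softmax correction $G_1$, the substitution $\vx^{[i]}=(\pmlabelstrue)^{[i]}(\vv-\sigma\vz^{[i]})$ with the oddness and Lipschitzness of $\tanh$, and operator-norm control of the Gaussian design in the regime $p\leq n$. Where you diverge is in how the two conclusions are closed. For the correlation bound, the paper lower-bounds the numerator $-\nabla\mathcal L_{\text{CE}}(\theta_t)^\top\vv$ and divides by the \emph{global} bound $\sup_\theta\|\nabla\mathcal L_{\text{CE}}(\theta)\|\leq 1+2\sigma+o_P(1)$ (its Lemma~\ref{lem:norm}, which uses only $|\tanh|\leq 1$ plus the spectral norm of $\tfrac{1}{\sqrt n}\mathbf X$); you instead decompose the gradient into its $\vv$-parallel and $\vv$-perpendicular parts and show the perpendicular part is $O(\sigma)$, which yields correlation tending to $1$ as $\sigma\to 0$ rather than merely $1/6$ --- a genuinely sharper intermediate statement. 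For the displacement bound, the paper uses a convex-cone argument (every increment lies in the cone $\{g:g^\top\vv/\|g\|\geq 1/6\}$, hence so does $\theta_t-\theta_0$, and the cap $\theta_t^\top\vv<1/10$ then bounds the full norm), which avoids any step count; you instead telescope the perpendicular displacement and therefore need the upper bound $T=O(1/\eta)$, which you correctly extract from the per-step growth of $\theta_t^\top\vv$. Both routes work; yours buys a stronger alignment statement at the cost of the extra step-count bookkeeping, while the paper's cone trick is more economical. One side remark of yours is off: the crude bound $|\tanh u|\leq 1$ \emph{does} suffice for the perpendicular estimate, since $\|\tfrac{1}{n}\sum_i c_i\vz^{[i]}\|\leq\|\tfrac{1}{\sqrt n}\mathbf Z\|\cdot\tfrac{1}{\sqrt n}\|\mathbf c\|_2$ needs only $\|\mathbf c\|_2\leq\sqrt n$ together with the spectral bound --- this is exactly how the paper's Lemma~\ref{lem:norm} proceeds --- so the Lipschitz refinement is convenient but not essential there.
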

\begin{proof}
We will prove the claim by induction. 
We write
\begin{equation*}
- \nabla \mathcal L_{\text{CE}}(\theta_t) = \frac{1}{2n}\sum_{i=1}^n \pmlabels^{[i]} \vx^{[i]} - \frac{1}{2n}\sum_{i=1}^n \vx^{[i]} \tanh(\theta_t^\top \vx^{[i]})\,.
\end{equation*}
Since $\E \vv^\top(\pmlabels^{[i]} \vx^{[i]}) = (1- \Delta)$, the law of large numbers implies
\begin{equation*}
\vv^\top \Big(\frac{1}{2n}\sum_{i=1}^n \pmlabels^{[i]} \vx^{[i]}\Big) = \frac 12 (1 - \Delta) + o_P(1)\,.
\end{equation*}
Moreover, by Lemma \ref{lem:sup_bound}, there exists a positive constant $c$ such that with high probability
\begin{align*}
\Big|\vv^\top\Big(\frac{1}{2n}\sum_{i=1}^n \vx^{[i]} \tanh(\theta_t^\top \vx^{[i]})\Big)\Big| &\leq \frac{1}{2} \Big(\frac{1}{n} \sum_{i=1}^n (\vv^\top \vx^{[i]})^2 \Big)^{1/2}\left(\frac{1}{n} \sum_{i=1}^n \tanh(\theta_t^\top \vx^{[i]})^2\right)^{1/2} \\
& \leq \frac 12 |\tanh(\theta_t^\top \vv)| + c\sigma(1 + \|\theta_t - \theta_0\|)\,.
\end{align*}
Thus, applying Lemma~\ref{lem:norm} yields that with high probability
\begin{equation}\label{eq:recursive_bound}
- \nabla \mathcal L_{\text{CE}}(\theta_t)^\top v/\|\nabla \mathcal L_{\text{CE}}(\theta_t)\| \geq \frac 12 ( (1 - \Delta) - |\tanh(\theta_t^\top \vv)|) - c\sigma(1 + \|\theta_t - \theta_0\|)\,.
\end{equation}
When $t = 0$, the first term is $\frac 12 (1-\Delta) + o_P(1)$ by Lemma~\ref{lem:init}, and the second term is $c \sigma$.
Since we have assumed that $\Delta \leq 1/2$, as long as $\sigma \leq \sigma_\Delta < c^{-1} \left(\frac 23 - \frac \Delta 2\right)$ we will have that $- \nabla \mathcal L_{\text{CE}}(\theta_0)^\top v/\|\nabla \mathcal L_{\text{CE}}(\theta_0)\| \geq 1/6$ with high probability, as desired.

We proceed with the induction.
We will show that $\|\theta_t - \theta_0\| \leq 1$ with high probability for $t < T$, and use~\eqref{eq:recursive_bound} to show that this implies the desired bound on the correlation of the gradient.
If we assume the claim holds up to time $t$, then the definition of gradient descent implies
\begin{equation*}
\theta_t - \theta_0 = \eta\sum_{s = 0}^{t-1} \mathbf g_s\,,
\end{equation*}
where $\mathbf g_s$ satisfies $\mathbf g_s^\top \vv/\|\mathbf g_s\| \geq 1/6$.
Since the set of vectors satisfying this requirement forms a convex cone, we obtain that
\begin{equation*}
(\theta_t - \theta_0)^\top \vv/\|\theta_t - \theta_0\| \geq 1/6\,
\end{equation*}
From this observation, we obtain two facts about $\theta_t$.
First, since $t < T$, the definition of $T$ implies that $\theta_t^\top \vv < .1$.
Since $|\theta_0^\top \vv| = o_P(1)$ by Lemma~\ref{lem:init}, we obtain that $\|\theta_t - \theta_0\| \leq 1$ with high probability.
Second, $\theta_t^\top \vv \geq \theta_0^\top \vv$, and since $|\theta_0^\top \vv| = o_P(1)$ we have in particular that $\theta_t^\top \vv > -.1$.
Therefore, with high probability, we also have $|\theta_t \top \vv| < .1$

Examining~\eqref{eq:recursive_bound}, we therefore see that the quantity on the right side is at least
\begin{equation*}
    \frac 12 ( (1 - \Delta) - .1) - 2 c \sigma\,.
\end{equation*}
Again since we have assumed that $\Delta \leq 1/2$, as long as $\sigma \leq \sigma_\Delta < (2c)^{-1} \left(\frac 23 - \frac \Delta 2 - .1\right)$, we obtain by~\eqref{eq:recursive_bound} that $- \nabla \mathcal L_{\text{CE}}(\theta_t)^\top v/\|\nabla \mathcal L_{\text{CE}}(\theta_t)\| \geq 1/6$.

\end{proof}

Given $\theta_t$, we denote by
\begin{equation*}
\hat{\mathcal A}(\theta_t) = \frac{1}{|\wrong|} \sum_{i \in \wrong} \1\{\mathrm{sign}(\theta_t^\top \vx^{[i]}) = (\pmlabelstrue)^{[i]}\}
\end{equation*}
the accuracy of $\theta_t$ on mislabeled examples.
We now show that the classifier's accuracy on the mislabeled examples improves over the first $T$ rounds.
In fact, we show that with high probability, $\hat{\mathcal A}(\theta_0) \approx 1/2$ whereas $\hat{\mathcal A}(\theta_T) \approx 1$.
\begin{theorem}
There exists a $\sigma_\Delta$ such that if $\sigma \leq \sigma_\Delta$, then
\begin{equation}
    \hat{\mathcal A}(\theta_0) \leq .5001
\end{equation}
and
\begin{equation}
    \hat{\mathcal A}(\theta_T) > .9999
\end{equation}
with high probability.
\end{theorem}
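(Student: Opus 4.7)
The plan rests on the identity $\theta^\top \vx^{[i]} = (\pmlabelstrue)^{[i]}(\theta^\top \vv - \sigma \theta^\top \vz^{[i]})$, under which a classifier $\theta$ predicts the true label on example $i$ correctly if and only if $\theta^\top \vv > \sigma\,\theta^\top \vz^{[i]}$. In both halves of the theorem the strategy is the same: decompose $\theta_t^\top \vz^{[i]}$ as a term that is effectively independent of $\vz^{[i]}$ plus a small error, control the independent term via a Gaussian tail estimate, and pass from per-example probabilities to the empirical average via concentration. Proposition~\ref{prop:correlation} already gives $\|\theta_t-\theta_0\|\le 1$ for all $t\le T$ with high probability, so in particular $\|\theta_T\|\le 3$.

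For $\hat{\mathcal A}(\theta_0)$ the decomposition is immediate, since $\theta_0$ is drawn uniformly on the sphere of radius $2$ independently of the data, and by Lemma~\ref{lem:init} satisfies $|\theta_0^\top \vv|=o_P(1)$. Conditional on $\theta_0$ and on the noise pattern $W$ (itself independent of the $\vx^{[i]}$'s), the vectors $\{\vz^{[i]}\}_{i\in W}$ are i.i.d.\ standard Gaussians, so the indicators $\1\{\theta_0^\top\vv>\sigma\theta_0^\top\vz^{[i]}\}$ are i.i.d.\ Bernoulli with parameter $\Phi(\theta_0^\top\vv/(2\sigma))=\tfrac12+o_P(1)$. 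Since $|W|=\Theta(n)$ with high probability, a conditional Hoeffding inequality yields $\hat{\mathcal A}(\theta_0)=\tfrac12+o_P(1)$, which is below $.5001$ for $n$ large.

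The claim about $\hat{\mathcal A}(\theta_T)$ is more delicate because $\theta_T$ depends on every $\vz^{[i]}$. The key step is a leave-one-out (LOO) construction: let $\tilde\theta_t^{[i]}$ be the $t$-th iterate of gradient descent run on the modified dataset in which $\vz^{[i]}$ has been replaced by an independent copy, so that $\tilde\theta_T^{[i]}$ is independent of $\vz^{[i]}$. The Hessian of $\mathcal L_\text{CE}$ is bounded in operator norm by a constant multiple of $\|\tfrac1n\sum_j \vx^{[j]}\vx^{[j]\top}\|_\text{op}$, which is $O(1)$ for $p/n\in(1-\Delta/2,1)$ by the Bai--Yin bound for Gaussian sample covariance matrices; hence $\nabla\mathcal L_\text{CE}$ is $L$-Lipschitz with $L=O(1)$. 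A Gr\"onwall-type recursion on $\|\theta_s-\tilde\theta_s^{[i]}\|$, combined with the per-step discrepancy $O(\|\vx^{[i]}\|/n)=O(\sigma\sqrt{p}/n)$ and $T\eta=O(1)$, then yields the stability bound $\|\theta_T-\tilde\theta_T^{[i]}\|=O(\sigma\sqrt{p}/n)=O(\sigma/\sqrt{n})$. Writing $\theta_T^\top\vz^{[i]}=(\tilde\theta_T^{[i]})^\top\vz^{[i]}+(\theta_T-\tilde\theta_T^{[i]})^\top\vz^{[i]}$, Cauchy--Schwarz with $\|\vz^{[i]}\|=O(\sqrt{p})$ bounds the second term by $O(\sigma\sqrt{p/n})=O(\sigma)$, while the first is, conditionally on $\tilde\theta_T^{[i]}$, Gaussian of variance at most $9$. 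Together with $\theta_T^\top\vv\ge 1/10$, the per-example failure event $\{\sigma\,\theta_T^\top\vz^{[i]}\ge 1/10\}$ is contained, once $\sigma$ is small enough that the $O(\sigma^2)$ error is less than $1/20$, in an event of the form $\{|G|>1/(20\sigma)\}$ for a Gaussian $G$ of variance $\le 9$; choosing $\sigma_\Delta$ sufficiently small drives this probability below $10^{-5}$.

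The final step upgrades the per-index tail bound to control of $\bar F:=|W|^{-1}\sum_{i\in W}\1\{\sigma\,\theta_T^\top\vz^{[i]}\ge 1/10\}$. The bound above shows $\E[\bar F]\le 10^{-5}$; for the variance, I would run the LOO construction simultaneously at two indices, defining $\tilde\theta_T^{[i,j]}$ by replacing both $\vz^{[i]}$ and $\vz^{[j]}$, so as to couple $(F_i,F_j)$ to a pair of indicators that are conditionally independent given $\tilde\theta_T^{[i,j]}$. Combined with concentration of $\|\tilde\theta_T^{[i,j]}\|$ around a deterministic value, this gives $\mathrm{Cov}(F_i,F_j)=o(1)$ uniformly in $i\ne j$, hence $\mathrm{Var}(\bar F)=o(1)$, and Chebyshev yields $\bar F<10^{-4}$ with high probability. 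The main obstacle is the LOO stability bound itself: it requires that $L$ remain $O(1)$ throughout the regime $p/n\in(1-\Delta/2,1)$, precisely where the spectrum of the sample covariance is most delicate, but it follows from classical Gaussian random matrix estimates; everything else reduces to routine Gaussian tail and concentration manipulations.
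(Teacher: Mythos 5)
Your treatment of $\hat{\mathcal A}(\theta_0)$ is essentially the paper's: condition on $\theta_0$ (and the noise pattern), note the per-example success probability is $1/2 + O(|\theta_0^\top\vv|/\sigma)$, and finish with Lemma~\ref{lem:init} and the law of large numbers. For $\hat{\mathcal A}(\theta_T)$ you take a genuinely different route. The paper never attempts to disentangle $\theta_T$ from the individual $\vz^{[i]}$; it shows $\theta_T^\top\vv/\|\theta_T\|\ge 1/50$ and then proves a lower bound on $\hat{\mathcal A}(\theta)$ that is \emph{uniform} over the whole cone $\mathcal{C}_\tau=\{\theta:\theta^\top\vv/\|\theta\|\ge\tau\}$, by sandwiching the indicator between Lipschitz surrogates and controlling the resulting empirical process via symmetrization and contraction (the relevant complexity is $\frac1\delta\sqrt{p/|\wrong|}$), finishing with Azuma's inequality. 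That makes the data-dependence of $\theta_T$ irrelevant. Your leave-one-out stability argument attacks that dependence head-on, which is a legitimate alternative strategy but one that carries substantially more technical burden.

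The main gap is in the independence claim at the heart of your LOO step: $\tilde\theta_T^{[i]}$ is \emph{not} independent of $\vz^{[i]}$, because $T$ is a stopping time of the original trajectory (the first $t$ with $\theta_t^\top\vv\ge 1/10$) and hence itself a function of $\vz^{[i]}$. You would need either to prove that $T$ is stable under the single-sample replacement, or to union-bound the Gaussian tail estimate over all $O(1/\eta)$ candidate times (feasible since $\eta$ is a fixed constant, but it must be done). Relatedly, your Gr\"onwall recursion uses $T\eta=O(1)$, whereas the paper only states $T=\Omega(1/\eta)$; the matching upper bound does follow from the proof of Proposition~\ref{prop:correlation} (the increment $\theta_{t+1}^\top\vv-\theta_t^\top\vv$ is bounded below by a constant times $\eta$), but it is an extra lemma you are implicitly invoking. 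Finally, the covariance step is only sketched: conditional independence of the surrogate indicators given $\tilde\theta_T^{[i,j]}$ does not by itself yield $\mathrm{Cov}(F_i,F_j)=o(1)$; you also need concentration of the conditional mean and an anti-concentration estimate near the decision threshold to pass from the surrogates back to the true indicators. None of these obstacles looks fatal, but each is real work that the paper's uniform small-ball argument avoids entirely.
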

\begin{proof}
Let us write $\vx^{[i]} = (\pmlabelstrue)^{[i]} (\vv - \sigma \vz^{[i]})$, where $\vz^{[i]}$ is a standard Gaussian vector.
If we fix $\theta_0$, then $\mathrm{sign}(\theta_0^\top \vx^{[i]}) = (\pmlabelstrue)^{[i]}$ if and only if $\sigma \theta_0^\top \vz^{[i]} < \theta_0^\top \vv$.
In particular this yields
\begin{equation*}
\E[\1\{\mathrm{sign}(\theta_0^\top \vx^{[i]}) = (\pmlabelstrue)^{[i]}\}| \theta_0] = \PP[\sigma \theta_0^\top \vz^{[i]} < \theta_0^\top \vv | \theta_0] \leq 1/2 + O(|\theta_0^\top \vv|/\sigma)\,.
\end{equation*}
By the law of large numbers, we have that, conditioned on $\theta_0$,
\begin{equation*}
\hat{\mathcal A}(\theta_0) \leq 1/2 + O(|\theta_0^\top \vv|/\sigma) + o_P(1)\,,
\end{equation*}
and applying Lemma~\ref{lem:init} yields $\hat{\mathcal A}(\theta_0) \leq 1/2 + o_P(1)$.

In the other direction, we employ a method based on~\citep{mendelson2014learning}.
The proof of Proposition~\ref{prop:correlation} establishes that $\|\theta_t - \theta_0\| \leq 1$ for all $t < T-1$ with high probability. Since $\eta < 1$ and $\|\theta_0\| = 2$, Lemma~\ref{lem:norm} implies that as long as $\sigma < 1/2$, $\|\theta_T\| \leq 5$ with high probability.
Since $\theta_T^\top \vv \geq .1$ by assumption, we obtain that $\theta_T^\top \vv/\|\theta_T\| \geq 1/50$ with high probability.

Note that $W$ is a random subset of $[n]$. For now, let us condition on this random variable.
If we write $\Phi$ for the Gaussian CDF, then by the same reasoning as above, for any fixed $\theta \in \R^p$,
\begin{equation*}
    \E[\1\{\mathrm{sign}(\theta^\top \vx^{[i]}) = (\pmlabelstrue)^{[i]}\}] = \PP[\sigma \theta^\top \vz^{[i]} < \theta^\top \vv] = \Phi(\sigma^{-1} \theta^\top \vv/\|\theta\|)
\end{equation*}
Therefore, if $\theta^\top \vv/\|\theta\| \geq \tau$, then for any $\delta > 0$, we have
\begin{equation}\label{eq:small_ball_1}
    \hat{\mathcal{A}}(\theta) \geq \Phi(\sigma^{-1}\tau - \delta) - \frac{1}{|\wrong|} \sum_{i \in \wrong} \Phi(\sigma^{-1}\tau - \delta) - \1\{\theta^\top \vz^{[i]}/\|\theta\| < \sigma^{-1} \tau\}
\end{equation}
Set
\begin{equation*}
    \phi(x) := \left\{\begin{array}{ll}
    1 & \text{if $x < \sigma^{-1} \tau - \delta$} \\
    \frac 1 \delta (\sigma^{-1} \tau - x) & \text{if $x \in[\sigma^{-1} \tau - \delta, \sigma^{-1} \tau]$} \\
    0 & \text{if $x > \sigma^{-1} \tau$.}
    \end{array}\right.
\end{equation*}
By construction, $\phi$ is $\frac 1 \delta$-Lipschitz and satisfies
\begin{equation*}
    \1\{x < \sigma^{-1} \tau - \delta\} \leq \phi(x) \leq \1\{x < \sigma^{-1} \tau\}
\end{equation*}
for all $x \in \R$.
In particular, we have
\begin{equation*}
    \Phi(\sigma^{-1}\tau - \delta) - \1\{\theta^\top \vz^{[i]}/\|\theta\| < \sigma^{-1} \tau\} \leq \E [\phi(\theta^\top \vz^{[i]}/\|\theta\|)] - \phi(\theta^\top \vz^{[i]}/\|\theta\|)\,.
\end{equation*}
Denote the set of $\theta \in \R^p$ satisfying $\theta^\top \vv/\|\theta\| \geq \tau$ by $\mathcal{C}_\tau$.
Combining the last display with~\eqref{eq:small_ball_1} yields
\begin{equation*}
    \E \inf_{\theta \in \mathcal{C}_\tau} \hat{\mathcal{A}}(\theta) \geq \Phi(\sigma^{-1}\tau - \delta) - \E \sup_{\theta \in \mathcal{C}_\tau} \frac{1}{|\wrong|} \sum_{i \in \wrong} \E [\phi(\theta^\top \vz^{[i]}/\|\theta\|)] - \phi(\theta^\top \vz^{[i]}/\|\theta\|)\,.
\end{equation*}
To control the last term, we employ symmetrization and contraction (see \citep[Chapter 4]{ledoux2013probability}) to obtain
\begin{align*}
    \E \sup_{\theta \in \mathcal{C}_\tau} \frac{1}{|\wrong|} \sum_{i \in \wrong} \E [\phi(\theta^\top \vz^{[i]}/\|\theta\|)] - \phi(\theta^\top \vz^{[i]}/\|\theta\|) & \leq \E \sup_{\theta \in \mathcal{C}_\tau} \frac{1}{|\wrong|} \sum_{i \in \wrong} \epsilon_i \phi(\theta^\top \vz^{[i]}/\|\theta\|) \\
    & \leq \frac 1 \delta \E \sup_{\theta \in \mathcal{C}_\tau} \frac{1}{|\wrong|} \sum_{i \in \wrong} \epsilon_i \theta^\top \vz^{[i]}/\|\theta\| \\
    & \leq \frac 1 \delta \E \sup_{\theta \in \R^p} \frac{1}{|\wrong|} \sum_{i \in \wrong} \epsilon_i \theta^\top \vz^{[i]}/\|\theta\| \\
    & = \frac 1 \delta \E \left\|\frac{1}{|\wrong|} \sum_{i \in \wrong} \epsilon_i \vz^{[i]}\right\|\,.
\end{align*}
where $\epsilon_i$ are independent Rademacher random variables.
The final quantity is easily seen to be at most $\frac 1 \delta \sqrt{p/|\wrong|}$.
Therefore we have
\begin{equation*}
    \E \inf_{\theta \in \mathcal C_\tau} \hat{\mathcal A}(\theta) \geq \Phi(\sigma^{-1} \tau - \delta) - \frac 1 \delta \sqrt{p/|\wrong|}\,,
\end{equation*}
and a standard application of Azuma's inequality implies that this bound also holds with high probability.
Since $\theta_T^\top \vv/\|\theta_T\| \geq 1/50$ and $|\wrong| \geq \Delta n/2$ with high probability, there exists a positive constant $c_\Delta$ such that
\begin{equation*}
    \hat{\mathcal A}(\theta_T) \geq \Phi((50\sigma)^{-1} - \delta) - c_\Delta/\delta\,.
\end{equation*}
If we choose $\delta= 10^{-4}/2c_\Delta$, then there exists a $\sigma_\Delta$ for which $\Phi((50\sigma_\Delta)^{-1} - \delta) > 1 - 10^{-4}/2$.
We obtain that for any $\sigma \leq \sigma_\Delta$, $ \hat{\mathcal A}(\theta_T) \leq 1 - 10^{-4}$, as claimed.
\end{proof}

\subsection{Vanishing gradients}\label{sec:gradients}
We now show that, over the first $T$ iterations, the coefficients $\tanh(\theta^\top \vx^{[i]}) - \pmlabels^{[i]}$ associated with the correctly labeled examples decrease, while the coefficients on mislabeled examples increase.
For simplicity, we write $\coeff^{[i]} := \tanh(\theta^\top \vx^{[i]}) - \pmlabels^{[i]}$.
\begin{proposition}
There exists a constant $\sigma_\Delta$ such that, for any $\sigma \leq \sigma_\Delta$, with high probability,
\begin{align*}
\frac{1}{|\correct|}\sum_{i \in \correct} (\coeff^{[i]}(\theta_T))^2 &< \frac{1}{|\correct|} \sum_{i \in \correct} (\coeff^{[i]}(\theta_0))^2 - .05\\
\frac{1}{|W|}\sum_{i \in W} (\coeff^{[i]}(\theta_T))^2 &> \frac{1}{|W|}\sum_{i \in W} (\coeff^{[i]}(\theta_0))^2 + .05 \,.
\end{align*}
That is, during the first stage, the coefficients on correct examples decrease while the coefficients on wrongly labeled examples increase.
\end{proposition}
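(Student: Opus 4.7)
The plan is to reduce the claim to one-dimensional averages of $\tanh$ of a scalar process. Using $\vx^{[i]} = (\pmlabelstrue)^{[i]}(\vv - \sigma \vz^{[i]})$ together with the odd symmetry of $\tanh$, define $u^{[i]}(\theta) := \theta^\top \vv - \sigma \theta^\top \vz^{[i]}$, so that $\tanh(\theta^\top \vx^{[i]}) = (\pmlabelstrue)^{[i]} \tanh(u^{[i]}(\theta))$. Factoring $(\pmlabelstrue)^{[i]}$ out of $\coeff^{[i]}(\theta) = \tanh(\theta^\top \vx^{[i]}) - \pmlabels^{[i]}$, and using $\pmlabels^{[i]} = (\pmlabelstrue)^{[i]}$ for $i \in \correct$ and $\pmlabels^{[i]} = -(\pmlabelstrue)^{[i]}$ for $i \in \wrong$, yields
\begin{align*}
(\coeff^{[i]}(\theta))^2 = \begin{cases} (1 - \tanh(u^{[i]}(\theta)))^2, & i \in \correct, \\ (1 + \tanh(u^{[i]}(\theta)))^2, & i \in \wrong. \end{cases}
\end{align*}
The proposition reduces to showing that both $\frac{1}{|\correct|}\sum_{i \in \correct}\tanh(u^{[i]}(\theta_T))$ and $\frac{1}{|\wrong|}\sum_{i \in \wrong}\tanh(u^{[i]}(\theta_T))$ are bounded below by a positive constant (with the $\tanh^2$ terms negligible), while the analogous averages at $\theta_0$ are $o_P(1) + O(\sigma)$.

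The initial averages are straightforward. Since $\theta_0$ is independent of $\{\vz^{[i]}\}$, has $\|\theta_0\| = 2$, and satisfies $\theta_0^\top \vv = o_P(1)$ by Lemma~\ref{lem:init}, the $u^{[i]}(\theta_0)$ are, conditionally on $\theta_0$, i.i.d.\ $\mathcal N(o_P(1), 4\sigma^2)$. Combining $|\tanh x|\le |x|$ with the law of large numbers gives $\frac{1}{|\correct|}\sum_{\correct}\tanh(u^{[i]}(\theta_0)) = o_P(1) + O(\sigma)$, and an analogous bound on the squared version and on the averages over $\wrong$. Substituting into the identity above shows that both initial averages of $(\coeff^{[i]})^2$ equal $1 + O(\sigma^2) + o_P(1)$.

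At $\theta_T$ the main challenge is that $\theta_T$ is data-dependent. By the proof of Proposition~\ref{prop:correlation} together with Lemma~\ref{lem:norm}, $\theta_T$ lies with high probability in the compact set $\mathcal{K} := \{\theta : \|\theta\| \le 5,\ \theta^\top \vv \ge 1/10\}$. Because $\correct$ and $\wrong$ are determined by $\pmlabels$, which is independent of the $\{\vz^{[i]}\}$'s, conditioning on $\pmlabels$ reduces matters to uniform concentration of $\theta \mapsto \frac{1}{|\correct|}\sum_{\correct}\tanh(u^{[i]}(\theta))$ over $\mathcal{K}$, which I expect to obtain by a covering/chaining argument analogous to the one underlying Lemma~\ref{lem:sup_bound}. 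The output is a uniform bound of the form $\sup_{\theta \in \mathcal{K}}\bigl|\frac{1}{|\correct|}\sum_{\correct}\tanh(u^{[i]}(\theta)) - \E_{\vz}\tanh(\theta^\top \vv - \sigma \theta^\top \vz)\bigr| = O(\sigma) + o_P(1)$, with the analogous bounds for $\wrong$ and for the squared version. A Taylor expansion of $\tanh$ yields $\E_{\vz}\tanh(\theta^\top \vv - \sigma \theta^\top \vz) = \tanh(\theta^\top \vv) + O(\sigma^2)$. Evaluating at $\theta = \theta_T$ and using $\theta_T^\top \vv \ge 1/10$ gives $\frac{1}{|\correct|}\sum_{\correct}\tanh(u^{[i]}(\theta_T)) \ge \tanh(1/10) - O(\sigma) - o_P(1) > 0.09$ for $\sigma$ sufficiently small, and symmetrically for $\wrong$. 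Inserting these into the identity for $(\coeff^{[i]})^2$ produces the required downward shift on $\correct$ and upward shift on $\wrong$ of magnitude $2\tanh(1/10) - O(\sigma) > 0.18$, comfortably exceeding the threshold $0.05$.

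The main obstacle is justifying the uniform concentration step at the data-dependent iterate $\theta_T$. Because $\mathcal{K}$ has bounded diameter and $\tanh$ is bounded and $1$-Lipschitz while $\theta \mapsto u^{[i]}(\theta)$ is linear with gradient $\vv - \sigma \vz^{[i]}$, the relevant empirical process is controllable by the same sub-Gaussian concentration tools already deployed in Lemma~\ref{lem:sup_bound}; the residual $O(\sigma)$ error is absorbed by choosing $\sigma \le \sigma_\Delta$ small enough that the $0.18$ margin dominates with room to spare.
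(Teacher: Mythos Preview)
Your proposal is correct and follows essentially the same route as the paper. Both arguments (i) show the averaged squared coefficients at $\theta_0$ equal $1+O(\sigma)+o_P(1)$ via independence of $\theta_0$ from the data and the law of large numbers, (ii) invoke the uniform concentration of Lemma~\ref{lem:sup_bound} (your ``covering/chaining argument analogous to Lemma~\ref{lem:sup_bound}'') to pass from the data-dependent $\theta_T$ to the deterministic value $\tanh(\theta_T^\top\vv)$, and (iii) use $\theta_T^\top\vv\ge 1/10$ together with $\|\theta_T-\theta_0\|\le 3$ from Proposition~\ref{prop:correlation} to conclude. The paper packages step (ii) via the $\ell_2$ triangle inequality on $\bigl(\frac{1}{|I|}\sum(\coeff^{[i]})^2\bigr)^{1/2}$, whereas you expand the square and track first and second moments of $\tanh(u^{[i]})$ separately; these are equivalent.

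One small caution: your phrase ``with the $\tanh^2$ terms negligible'' is imprecise, since $\frac{1}{|\correct|}\sum_{\correct}\tanh^2(u^{[i]}(\theta_T))\approx\tanh^2(\theta_T^\top\vv)$ need not be small (there is no tight upper bound on $\theta_T^\top\vv$). This is harmless because you also control the squared version uniformly, and the full quantity $(1-\tanh(\theta_T^\top\vv))^2$ is monotone decreasing in $\theta_T^\top\vv$, so the worst case is still $\theta_T^\top\vv=1/10$; the resulting shift is $1-(1-\tanh(1/10))^2\approx 0.19$, matching your numerics and the paper's.
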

\begin{proof}
Let us first consider
\begin{equation*}
\frac{1}{|\correct|} \sum_{i \in \correct} (\tanh(\theta_0^\top \vx^{[i]}) - \pmlabels^{[i]})^2
\end{equation*}
For fixed initialization $\theta_0$, the law of large numbers implies that this quantity is near
\begin{equation*}
\E_{\vx, \pmlabels} (\pmlabels\tanh(\theta_0^\top \vx) - 1)^2 \geq \left(\E_{\vx, \pmlabels}\pmlabels\tanh(\theta_0^\top \vx) - 1\right)^2\,.
\end{equation*}
Let us write $\vx = \pmlabelstrue (\vv - \sigma \vz)$, where $\vz$ is a standard Gaussian vector.
Then the fact that $\tanh$ is Lipschitz implies
\begin{equation*}
\E_{\vx, \pmlabels}\pmlabels\tanh(\theta_0^\top \vx) \leq \E_{\vx, \pmlabels}\pmlabels\tanh(\pmlabelstrue \sigma \theta_0^\top \vz) + |\theta_0^\top \vv| = |\theta_0^\top \vv|\,,
\end{equation*}
where we have used that $\E[\tanh(\pmlabelstrue \sigma\theta_0^\top \vz)| \pmlabels] = 0$.
By Lemma~\ref{lem:init}, $|\theta_0^\top \vv| = o_P(1)$.
Hence
\begin{equation*}
\frac{1}{|\correct|} \sum_{i \in \correct} (\tanh(\theta_0^\top \vx^{[i]}) - \pmlabels^{[i]})^2 \geq 1 - o_P(1)\,.
\end{equation*}

At iteration $T$, we have that $\theta_T^\top \vv \geq .1$, by assumption, and $\|\theta_T - \theta_0\| \leq 3$, by the proof of Proposition~\ref{prop:correlation}.
We can therefore apply Lemma~\ref{lem:sup_bound} to obtain
\begin{align*}
    \left(\frac{1}{|C|} \sum_{i \in C} (\coeff^{[i]})^2 \right)^{1/2} & \leq \left(\frac{1}{|C|} \sum_{i \in C} ((\pmlabelstrue)^{[i]} \tanh(\theta_T^\top \vv) - \pmlabels^{[i]}) \right)^{1/2} + \sigma(2 + 3 c_\Delta) + o_P(1) \\
    &= |\tanh(\theta_T^\top \vv) - 1| + + \sigma(2 + 3 c_\Delta) + o_P(1) \\\
    & \leq |\tanh(.1) - 1| + \sigma(2 + 3 c_\Delta) + o_P(1)\,,
\end{align*}
where the equality uses the fact that $(\pmlabelstrue)^{[i]} = \pmlabels^{[i]}$ for all $i \in C$.
As long as $\sigma \leq \sigma_\Delta < .01/(2 + 3 c_\Delta)$ this quantity is strictly less than $.95$.
We therefore obtain that, for $\sigma \leq \sigma_\Delta$, $\frac{1}{|\correct|}\sum_{i \in \correct} (\coeff^{[i]}(\theta_T))^2 < \frac{1}{|\correct|} \sum_{i \in \correct} (\coeff^{[i]}(\theta_0))^2 - .05$ with high probability. This proves the first claim.

The second claim is established by an analogous argument: for fixed initialization $\theta_0$, we have
\begin{equation*}
    \E \tanh(\theta_0^\top \vx)^2 \leq \E (\theta_0^\top \vx)^2 = 4 \sigma^2 + (\theta_0^\top \vv)^2\,,
\end{equation*}
so as above we can conclude that
\begin{equation*}
\frac{1}{|\wrong|} \sum_{i \in \wrong} (\coeff^{[i]}(\theta_0))^2 \leq 1 + 4\sigma^2 + o_P(1)\,.
\end{equation*}
We likewise have by another application of Lemma~\ref{lem:sup_bound}
\begin{align*}
\left(\frac{1}{|\wrong|} \sum_{i \in \wrong} (\tanh(\theta_T^\top \vx^{[i]}) - \pmlabels^{[i]})^2\right)^{1/2} & \geq |\tanh(-\theta_T^\top \vv) - 1| - \sigma(2 + 3 c_\Delta) - o_P(1) \\
& \geq 1 + \tanh(.1) - \sigma(2 + 3 c_\Delta) - o_P(1)\,.
\end{align*}
where we again have used that $(\pmlabelstrue)^{[i]} = - \pmlabels^{[i]}$ for $i \in W$.
If we choose $\sigma_\Delta$ small enough that
\begin{equation*}
    1.05 + 4 \sigma_\Delta^2 < (1 + \tanh(.1) - \sigma_\Delta(2 + 3 c_\Delta))^2\,,
\end{equation*}
then we will have for all $\sigma \leq \sigma_\Delta$ that  $\frac{1}{|W|}\sum_{i \in W} (\coeff^{[i]}(\theta_T))^2 > \frac{1}{|W|}\sum_{i \in W} (\coeff^{[i]}(\theta_0))^2 + .05$ with high probability.
This proves the claim.
\end{proof}

\subsection{Memorization}\label{sec:memorization}
To show that the labels are memorized asymptotically, it suffices to show that the classes $S_+ := \{\vx^{[i]}: \pmlabels^{[i]} = +1\}$ and $S_- := \{\vx^{[i]}: \pmlabels^{[i]} = -1\}$ are linearly separable.
Indeed, it is well known that for linearly separable data, gradient descent performed on the logistic loss will yield a classifier which perfectly memorizes the labels~\citep[see, e.g.][Lemma 1]{soudry2018implicit}.
It is therefore enough to establish the following theorem.
\begin{theorem}\label{thm:asymptotic_memorization_appendix}
If $p, n \to \infty$ and $\liminf_{p, n \to \infty} p/n > 1 - \Delta/2$, then the classes $S_+$ and $S_-$ are linearly separable with probability tending to $1$.
\end{theorem}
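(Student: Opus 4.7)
The plan is to reduce the statement to an application of Wendel's classical half-space theorem. Setting $\mathbf{w}^{[i]} := \pmlabels^{[i]} \vx^{[i]}$, the classes $S_+$ and $S_-$ are (homogeneously) linearly separable iff there exists $\theta$ with $\theta^\top \mathbf{w}^{[i]} > 0$ for every $i$, and by Gordan's theorem of the alternative this fails iff some $\lambda \in \R_{\geq 0}^n \setminus \{0\}$ satisfies $\sum_i \lambda_i \mathbf{w}^{[i]} = 0$.

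Using $\vx^{[i]} = (\pmlabelstrue)^{[i]}(\vv - \sigma \vz^{[i]})$ and writing $\mu^{[i]} := \pmlabels^{[i]}(\pmlabelstrue)^{[i]} \in \{\pm 1\}$ (equal to $+1$ on $\correct$ and $-1$ on $\wrong$), one rewrites $\mathbf{w}^{[i]} = \mu^{[i]} \vv + \sigma \tilde\vz^{[i]}$, where $\tilde\vz^{[i]} := -\mu^{[i]} \vz^{[i]}$ are still i.i.d.\ $N(0, I_p)$. Projecting any hypothetical identity $\sum_i \lambda_i \mathbf{w}^{[i]} = 0$ onto $\vv^\perp$ annihilates the $\mu^{[i]} \vv$ contributions and produces $\sum_i \lambda_i X_i = 0$, where $X_i$ is the projection of $\tilde\vz^{[i]}$ onto $\vv^\perp \cong \R^{p-1}$. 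The $X_i$ are i.i.d.\ standard Gaussians in $\R^{p-1}$, hence absolutely continuous and centrally symmetric, so it suffices to show that, with probability tending to $1$, the $X_i$ all lie in some open half-space of $\R^{p-1}$.

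This is precisely the probability computed by Wendel's theorem:
\begin{equation*}
    \frac{1}{2^{n-1}} \sum_{k=0}^{p-2} \binom{n-1}{k} \;=\; \PP[B \leq p-2], \qquad B \sim \mathrm{Bin}(n-1, 1/2).
\end{equation*}
Under the hypothesis $\liminf p/n > 1 - \Delta/2$ with $\Delta \leq 1/2$, we have $p - 2 - \EE[B] = \Omega(n)$, which is much larger than the standard deviation $\Theta(\sqrt{n})$ of $B$, so a Chernoff bound for the binomial immediately yields $\PP[B \leq p-2] \to 1$.

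The essential content is the reduction to an i.i.d.\ symmetric problem in $\R^{p-1}$ via projection onto $\vv^\perp$, followed by invocation of Wendel's theorem; the binomial tail estimate is then routine. The only real care required is bookkeeping the independence structure to ensure that, even after conditioning on the random set $\wrong$ of corrupted indices, the vectors $X_i$ are genuinely i.i.d.\ and centrally symmetric in $\R^{p-1}$, which follows because the sign factor $-\mu^{[i]}$ preserves the centered Gaussian law of $\vz^{[i]}$ and because $\vv$ is deterministic.
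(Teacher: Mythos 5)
Your proof is correct, but it takes a genuinely different route from the paper's. The paper works directly in $\R^p$ with the unsymmetrized points: it invokes the Schl\"afli--Cover count of homogeneously separable dichotomies of points in general position, observes that the realized dichotomy $S_+$ is a perturbation of the true one $T_+$ with $\PP[S_+ = S \mid X] = (\Delta/2)^{|T_+ \triangle S|}(1-\Delta/2)^{n-|T_+\triangle S|}$, and bounds the probability of landing in the non-separable family by a rearrangement argument, arriving at $2\,\PP[\mathrm{Bin}(n,\Delta/2) \leq n-p]$ --- which is where the threshold $p/n > 1-\Delta/2$ comes from. You instead absorb the signs $\mu^{[i]}$ into the Gaussian noise, project onto $\vv^\perp$, and apply Wendel's theorem to $n$ i.i.d.\ centrally symmetric points in $\R^{p-1}$; the bookkeeping you flag (independence of the $X_i$ from $\mu^{[i]}$, hence from $\wrong$) is exactly the point that needs checking and it does go through. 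The two arguments rest on the same combinatorial count, but yours converts the biased dichotomy into the exactly exchangeable setting where the count gives the probability directly, at the cost of one dimension. Notably, your sufficient condition is $\liminf p/n > 1/2$ independently of $\Delta$, which is implied by (and for $\Delta<1$ strictly weaker than) the theorem's hypothesis, so you actually prove a stronger statement: the separating direction can be taken orthogonal to $\vv$, using only the noise degrees of freedom. What the paper's argument buys in exchange is that it does not lean on the sign-flip symmetry of the noise distribution, only on general position, and it makes the dependence on the corruption level $\Delta$ explicit.
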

\begin{proof}
Write $X = \{\vx^{[1]}, \dots, \vx^{[n]}\}$.
Since the samples $\vx^{[1]}, \dots, \vx^{[n]}$ are drawn from a distribution absolutely continuous with respect to the Lebesgue measure, they are in general position with probability $1$.
By a theorem of Schl\"{a}fli \citep[see][Theorem 1]{cover65}, there exist
\begin{equation*}
    C(n, p) := 2 \sum_{k = 0}^{p-1}\binom{n-1}{k}
\end{equation*}
different subsets $S \subseteq X$ that are linearly separable from their complements.
In particular, there are at most
\begin{equation*}
    2^{n} - C(n, p) =  2 \sum_{k=p}^{n-1}\binom{n-1}{k} = 2 \sum_{k=0}^{n - p - 1} \binom{n-1}{k}
\end{equation*}
partitions of $X$ which are \emph{not} separable.

Write $B$ for the bad set of non-separable subsets $S \subseteq X$.
Conditional on $X$, the probability that the classes $S_+$ and $S_-$ are not separable is just $\PP[S_+ \in B | X]$.

Let us write $T_+:= \{\vx^{[i]}: (\pmlabelstrue)^{[i]} = +1\}$.
For each $i$, the example $\vx^{[i]}$ is in $S_+$ with probability $1 - (\Delta/2)$ if $i \in T_+$ or $\Delta/2$ or if $i \notin T_+$.
We therefore have for any $S \subset X$ that
\begin{equation*}
    \PP[S_+ = S | X] = (\Delta/2)^{|T_+ \triangle S|} (1-(\Delta/2))^{n - |T_+ \triangle S|}\,.
\end{equation*}
We obtain that
\begin{align}
    \PP[S_+ \in B| X] & = \sum_{S \in B} (\Delta/2)^{|T_+ \triangle S|} (1-(\Delta/2))^{n - |T_+ \triangle S|} \nonumber \\
    & = \sum_{k = 0}^n |\{S \in B : |T_+ \triangle S| = k\}| \cdot (\Delta/2)^{k} (1-(\Delta/2))^{n - k} \label{eq:sum}\,.
\end{align}
The set $\{S \in B : |T_+ \triangle S| = k\}$ has cardinality at most $\binom{n}{k}$.
Moreover, $\sum_{k = 0}^n |\{S \in B : |T_+ \triangle S| = k\}| = |B|$. We can therefore bound the sum \eqref{eq:sum} by the following optimization problem:
\begin{align}
    \max_{x_1, \dots x_n} \,\, & \sum_{k=0}^n x_k \cdot (\Delta/2)^{k} (1-(\Delta/2))^{n - k} \label{eq:prog}\\
    &\text{s.t.}\,\,\, x_k \in \left[0, \binom{n}{k}\right], \sum_{k=0}^n x_k = |B|\,.\nonumber
\end{align}
Since $\Delta \leq 1$, the probability $(\Delta/2)^k (1-(\Delta/2))^{n-k}$ is a nonincreasing function of $k$.
Therefore, because $|B| \leq 2 \sum_{k=0}^{n-p-1} \binom{n-1}{k} \leq 2 \sum_{k=0}^{n-p} \binom{n}{k}$, the value of~\eqref{eq:prog} is less than
\begin{equation*}
    2 \sum_{k=0}^{n-p} \binom{n}{k} (\Delta/2)^{k} (1-(\Delta/2))^{n - k} = 2 \cdot \PP[\mathrm{Bin}(n, \Delta/2) \leq (n-p)]\,.
\end{equation*}
If $\limsup_{n, p \to \infty} 1 - p/n < \Delta/2$, then this probability approaches $0$ by the law of large numbers.
We have shown that if $\liminf_{n, p} p/n > 1 - \Delta/2$, then
\begin{equation*}
    \PP[S_+ \in B | X] \leq 2 \cdot \PP[\mathrm{Bin}(n, \Delta/2) \leq (n-p)] = o(1)
\end{equation*}
holds $X$-almost surely, which proves the claim.

\end{proof}
\subsection{Additional lemmas}
\label{sec:lemmas}
\begin{lemma}\label{lem:init}
Suppose that $\theta_0$ is initialized randomly on the sphere of radius $2$.
\begin{equation*}
|\theta_0^\top \vv| = o_P(1).
\end{equation*}
\end{lemma}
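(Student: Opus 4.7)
The plan is to exploit the rotational symmetry of the uniform distribution on the sphere, which makes the distribution of $\theta_0^\top \vv$ depend only on $\|\vv\| = 1$, not on the specific direction of $\vv$. So without loss of generality I can take $\vv = e_1$, the first standard basis vector, and study $\theta_0^\top e_1$, which is simply the first coordinate of a uniformly random point on the sphere of radius $2$ in $\mathbb R^p$.

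First, I would bound $\mathbb E[(\theta_0^\top \vv)^2]$ using symmetry. Let $e_1, \dots, e_p$ be any orthonormal basis of $\mathbb R^p$. Since $\|\theta_0\|^2 = 4$ almost surely, we have $\sum_{i=1}^p (\theta_0^\top e_i)^2 = 4$, and by rotational invariance each term has the same expectation, so $\mathbb E[(\theta_0^\top e_i)^2] = 4/p$ for every $i$. Applying this with $e_1 = \vv$ gives $\mathbb E[(\theta_0^\top \vv)^2] = 4/p$. Then Markov's inequality yields, for any fixed $\varepsilon > 0$,
\begin{equation*}
\PP[|\theta_0^\top \vv| > \varepsilon] \leq \frac{\mathbb E[(\theta_0^\top \vv)^2]}{\varepsilon^2} = \frac{4}{\varepsilon^2 p} \to 0
\end{equation*}
as $p \to \infty$, which is the definition of $|\theta_0^\top \vv| = o_P(1)$.

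As an alternative (and sanity check), one can realize the uniform distribution on the sphere of radius $2$ via $\theta_0 = 2 g / \|g\|$ for $g \sim \mathcal N(0, I_p)$. Then $\theta_0^\top \vv = 2 g^\top \vv / \|g\|$, where the numerator is a standard Gaussian (hence $O_P(1)$) and $\|g\| = \sqrt{p}(1 + o_P(1))$ by standard $\chi^2$ concentration, giving $|\theta_0^\top \vv| = O_P(p^{-1/2}) = o_P(1)$.

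There is no real obstacle here — this is a one-line concentration argument. The only thing to be careful about is invoking rotational symmetry correctly (the uniform measure on the sphere is the unique rotationally invariant probability measure, so $\theta_0^\top \vv \stackrel{d}{=} \theta_0^\top e_1$ for any unit vector $\vv$), after which the second-moment computation is immediate.
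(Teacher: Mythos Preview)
Your proof is correct and follows essentially the same approach as the paper: reduce to $\vv = e_1$ by rotational invariance, use symmetry of the coordinates together with $\|\theta_0\|^2 = 4$ to get $\E[(\theta_0^\top \vv)^2] = O(1/p)$, and conclude $o_P(1)$. The paper's version is terser (it just says ``the claim follows'' after the second-moment bound, and in fact writes $\|\theta_0\|^2 = 2$, an apparent typo for $4$), while you spell out the Markov step and add the Gaussian-representation sanity check, but the argument is the same.
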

\begin{proof}
Without loss of generality, take $\vv = \mathbf e_1$, the first elementary basis vector. Since the coordinates of $\theta_0$ each have the same marginal distribution and $\|\theta_0\|^2 = 2$ almost surely, we must have $\E |\theta_0^\top \mathbf e_1|^2 = 2/p$.
The claim follows.
\end{proof}

\begin{lemma}\label{lem:norm}
\begin{equation*}
\sup_{\theta \in \R^d} \|\nabla \mathcal L_{\text{CE}}(\theta)\| \leq 1 + 2\sigma + o_P(1)\,.
\end{equation*}
\end{lemma}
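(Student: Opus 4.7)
The plan is to take the sup inside and reduce to an operator norm bound on the data matrix, which is uniform in $\theta$.

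Write $\coeff^{[i]}(\theta) := \tanh(\theta^\top \vx^{[i]}) - \pmlabels^{[i]}$, so that
\[
\nabla \mathcal L_\text{CE}(\theta) = \frac{1}{2n} \sum_{i=1}^n \coeff^{[i]}(\theta) \, \vx^{[i]}.
\]
The first step is to observe that, since $\tanh$ takes values in $(-1, 1)$ and $\pmlabels^{[i]} \in \{-1, +1\}$, we have the $\theta$-free bound $|\coeff^{[i]}(\theta)| \leq 2$. For any unit vector $u \in \R^p$, Cauchy--Schwarz applied to the pairing $\sum_i \coeff^{[i]}(\theta) (u^\top \vx^{[i]})$ then yields
\[
|u^\top \nabla \mathcal L_\text{CE}(\theta)| \;\leq\; \frac{1}{2n}\Bigl(\sum_{i=1}^n (\coeff^{[i]}(\theta))^2\Bigr)^{1/2}\Bigl(\sum_{i=1}^n (u^\top \vx^{[i]})^2\Bigr)^{1/2} \;\leq\; \frac{1}{\sqrt{n}}\,\|Xu\|,
\]
where $X \in \R^{n \times p}$ is the data matrix with rows $(\vx^{[i]})^\top$. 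Taking the supremum over $\|u\|=1$ and then over $\theta$ gives
\[
\sup_{\theta \in \R^p} \|\nabla \mathcal L_\text{CE}(\theta)\| \;\leq\; \frac{\|X\|_{\mathrm{op}}}{\sqrt{n}}.
\]

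The second step is to control $\|X\|_{\mathrm{op}}$. Using $\vx^{[i]} = (\pmlabelstrue)^{[i]} \vv - \sigma (\pmlabelstrue)^{[i]} \vz^{[i]}$, I decompose $X = M + \sigma \widetilde Z$, where $M$ has rows $(\pmlabelstrue)^{[i]} \vv^\top$ and $\widetilde Z$ has rows $-(\pmlabelstrue)^{[i]} (\vz^{[i]})^\top$. Since $\|\vv\| = 1$, the rank-one matrix $M$ satisfies $\|M\|_{\mathrm{op}} = \sqrt{n}$ exactly. The rows of $\widetilde Z$ are i.i.d.\ standard Gaussian vectors in $\R^p$ (the random signs $(\pmlabelstrue)^{[i]}$ preserve the distribution), so by the standard non-asymptotic bound for the operator norm of a Gaussian matrix,
\[
\|\widetilde Z\|_{\mathrm{op}} \leq \sqrt{n} + \sqrt{p} + o_P(\sqrt{n}).
\]
Under the hypothesis $p/n < 1$, this gives $\|\widetilde Z\|_{\mathrm{op}} \leq 2\sqrt{n}(1 + o_P(1))$, and the triangle inequality yields $\|X\|_{\mathrm{op}} \leq \sqrt{n}(1 + 2\sigma + o_P(1))$. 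Dividing by $\sqrt{n}$ delivers the claim.

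There is no real obstacle here: the only mildly subtle point is to exchange the sup over $\theta$ with the Cauchy--Schwarz bound, which works cleanly because the coefficient factor $\sum_i (\coeff^{[i]}(\theta))^2$ admits the deterministic $\theta$-independent bound $4n$. Everything else is operator-norm concentration for a Gaussian matrix plus a rank-one perturbation.
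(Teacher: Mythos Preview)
Your proof is correct and follows essentially the same route as the paper's: bound the coefficient vector uniformly in $\theta$ (the paper writes $\nabla \mathcal L_{\text{CE}}(\theta) = \frac{1}{\sqrt n}\mathbf X\alpha$ with $\|\alpha\|\le 1$, which is your Cauchy--Schwarz step in matrix form), then control $\|\mathbf X\|_{\mathrm{op}}/\sqrt n$ via the rank-one-plus-Gaussian decomposition. The only cosmetic difference is that the paper packages the operator-norm bound into its Lemma~\ref{lem:concentration}, whereas you carry it out inline.
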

\begin{proof}
Denote by $\alpha$ the vector with entries $\alpha_i = \frac{1}{2 \sqrt n} [\tanh(\theta^\top \vx^{[i]}) - \pmlabels^{[i]}]$.
Since $|\tanh(x)| \leq 1$ for all $x \in \R$, we have $\|\alpha\| \leq 1$.
Therefore
\begin{equation*}
\nabla \mathcal L_{\text{CE}}(\theta) = \frac{1}{\sqrt n} \sum_{i=1}^n \vx^{[i]}  \alpha_i \leq \left\|\frac{1}{\sqrt n}\mathbf X\right\|\,,
\end{equation*}
where $\mathbf X \in \R^{p \times n}$ is a matrix whose columns are given by the vectors $\vx^{[i]}$.
By Lemma~\ref{lem:concentration}, we have
\begin{equation*}
\left\|\frac{1}{\sqrt n}\mathbf X\right\| = \left\|\frac{1}{n}\mathbf X \mathbf X^\top \right\|^{1/2}\leq 1 + 2\sigma + o_P(1)\,.
\end{equation*}
This yields the claim.
\end{proof}

\begin{lemma}\label{lem:sup_bound}
Fix an initialization $\theta_0$ satisfying $\|\theta_0\| = 2$.
For any $\tau > 0$ and for $I = \correct$ or $I = \wrong$, we have
\begin{equation*}
\sup_{\theta: \|\theta - \theta_0\| \leq \tau} \left(\frac{1}{|I|} \sum_{i \in I} ((\pmlabelstrue)^{[i]}\tanh(\theta^\top \vx^{[i]}) - \tanh(\theta^\top \vv))^2\right)^{1/2} \leq \sigma(2 + c_\Delta \tau) + o_P(1)\,.
\end{equation*}
The same claim holds with $I = [n]$ with $c_\Delta$ replaced by $2$.
\end{lemma}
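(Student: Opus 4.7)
The plan is to first use the structural identity $\vx^{[i]} = (\pmlabelstrue)^{[i]}(\vv - \sigma \vz^{[i]})$ together with the oddness of $\tanh$ to eliminate the factor $(\pmlabelstrue)^{[i]}$ from each summand. Specifically,
\[
(\pmlabelstrue)^{[i]} \tanh(\theta^\top \vx^{[i]}) \;=\; (\pmlabelstrue)^{[i]} \tanh\bigl((\pmlabelstrue)^{[i]}(\theta^\top \vv - \sigma \theta^\top \vz^{[i]})\bigr) \;=\; \tanh(\theta^\top \vv - \sigma \theta^\top \vz^{[i]}),
\]
so the inner expression reduces to $\tanh(\theta^\top \vv - \sigma \theta^\top \vz^{[i]}) - \tanh(\theta^\top \vv)$, a quantity that no longer depends on whether $i\in\correct$ or $i\in\wrong$. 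Consequently the same estimate will apply uniformly across $I = \correct$, $I = \wrong$, and $I = [n]$; only the cardinality of the index set changes.

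Next, since $\tanh$ is $1$-Lipschitz, each summand has absolute value at most $\sigma\lvert\theta^\top \vz^{[i]}\rvert$, so by Minkowski's inequality applied to the empirical $L^2$ norm
\[
\Bigl(\tfrac{1}{|I|} \sum_{i \in I}(\theta^\top \vz^{[i]})^2\Bigr)^{1/2} \;\leq\; \Bigl(\tfrac{1}{|I|} \sum_{i \in I}(\theta_0^\top \vz^{[i]})^2\Bigr)^{1/2} + \Bigl(\tfrac{1}{|I|} \sum_{i \in I}((\theta-\theta_0)^\top \vz^{[i]})^2\Bigr)^{1/2}.
\]
I would bound the first term via the law of large numbers: conditional on $\theta_0$ the scalars $\theta_0^\top \vz^{[i]}$ are i.i.d.\ $\mathcal N(0, \|\theta_0\|^2) = \mathcal N(0, 4)$, so the empirical second moment tends to $4$ in probability and its square root to $2$. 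For the second term I pass to the supremum,
\[
\sup_{\|\theta - \theta_0\|\leq\tau}\Bigl(\tfrac{1}{|I|} \sum_{i \in I}((\theta-\theta_0)^\top \vz^{[i]})^2\Bigr)^{1/2} \;=\; \tau \cdot \|Z_I/\sqrt{|I|}\|,
\]
where $Z_I\in \R^{p\times|I|}$ has columns $\{\vz^{[i]} : i \in I\}$ and $\|\cdot\|$ denotes the operator norm.

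The main technical step is controlling this operator norm. Crucially, the noise realization (which determines $\correct$ and $\wrong$) is independent of $\{\vz^{[i]}\}$, so conditional on the partition the columns of $Z_I$ remain i.i.d.\ standard Gaussian, and the same random-matrix concentration estimate invoked in the proof of Lemma~\ref{lem:norm} (a Bai--Yin / Davidson--Szarek bound) yields $\|Z_I/\sqrt{|I|}\| \leq 1 + \sqrt{p/|I|} + o_P(1)$. Since $p/n < 1$ and $|I|/n$ concentrates at $1-\Delta/2$, $\Delta/2$, or $1$ according as $I$ equals $\correct$, $\wrong$, or $[n]$, the ratio $p/|I|$ is bounded by a constant depending only on $\Delta$, giving $\|Z_I/\sqrt{|I|}\| \leq c_\Delta + o_P(1)$ for some $c_\Delta = c_\Delta(\Delta)$; in the case $I = [n]$, the inequality $\|Z_{[n]}/\sqrt{n}\| \leq 2 + o_P(1)$ is immediate. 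Assembling the three contributions and multiplying by $\sigma$ produces $\sigma(2 + c_\Delta \tau) + o_P(1)$ uniformly over the ball, as required. The main obstacle I foresee is making the supremum over $\theta$ truly uniform---this is why the linearization $\theta = \theta_0 + (\theta-\theta_0)$ followed by the operator-norm bound is essential, converting an uncountable supremum into a single spectral quantity whose fluctuations can be controlled by standard tools.
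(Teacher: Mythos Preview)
Your proposal is correct and follows essentially the same route as the paper: the structural identity $\vx^{[i]} = (\pmlabelstrue)^{[i]}(\vv - \sigma \vz^{[i]})$ plus oddness and $1$-Lipschitzness of $\tanh$ reduce the problem to bounding $\sigma(\tfrac{1}{|I|}\sum_i (\theta^\top \vz^{[i]})^2)^{1/2}$, then the Minkowski split $\theta = \theta_0 + (\theta-\theta_0)$ together with the law of large numbers for the first piece and a Gaussian operator-norm bound for the second piece (after conditioning on the label-noise partition, which is independent of the $\vz^{[i]}$) yields exactly $\sigma(2 + c_\Delta \tau) + o_P(1)$. The paper packages the operator-norm step into Lemma~\ref{lem:concentration}, but the content is identical to what you describe.
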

\begin{proof}
Let us write $\vx^{[i]} = (\pmlabelstrue)^{[i]} (\vv - \sigma \vz^{[i]})$, where $\vz^{[i]}$ is a standard Gaussian vector.
Since $\tanh$ is odd and 1-Lipschitz, we have
\begin{equation*}
|(\pmlabelstrue)^{[i]}\tanh(\theta^\top \vx^{[i]}) - \tanh(\theta^\top \vv)| = |\tanh(\theta^\top \vv - \theta^\top \sigma \vz^{[i]}) - \tanh(\theta^\top \vv)| \leq \sigma |\theta^\top \vz^{[i]}|\,.
\end{equation*}
We therefore obtain
\begin{align*}
\Big(\frac{1}{|I|} \sum_{i \in I} ((\pmlabelstrue)^{[i]}\tanh(\theta^\top \vx^{[i]}) - \tanh(\theta^\top \vv))^2\Big)^{1/2} & \leq \sigma \Big(\frac{1}{|I|} \sum_{i \in I} (\theta^\top \vz^{[i]})^2\Big)^{1/2} \\
& \leq \sigma \Big(\frac{1}{|I|} \sum_{i \in I} (\theta_0^\top \vz^{[i]})^2\Big)^{1/2} + \sigma \Big(\frac{1}{|I|} \sum_{i \in I} ((\theta-\theta_0)^\top \vz^{[i]})^2\Big)^{1/2} \\
& \leq \sigma \Big(\frac{1}{|I|} \sum_{i \in I} (\theta_0^\top \vz^{[i]})^2\Big)^{1/2} + \sigma \|\theta - \theta_0\| \left\|\frac{1}{|I|} \sum_{i \in I} \vz^{[i]}(\vz^{[i]})^\top\right\|\,.
\end{align*}
Taking a supremum over all $\theta$ such that $\|\theta - \theta_0\| \leq \tau$ and applying Lemma~\ref{lem:concentration} yields the claim.
\end{proof}

\begin{lemma}\label{lem:concentration}
Assume $p \leq n$.
There exists a positive constant $c_\Delta$ depending on $\Delta$ such that for $I = \correct$ or $I=\wrong$, 
\begin{align*}
\frac{1}{|I|} \sum_{i \in I} (\theta_0^\top \vz^{[i]})^2 & \leq 2 + o_P(1)\\
\left\|\frac{1}{|I|} \sum_{i \in I} \vz^{[i]}(\vz^{[i]})^\top\right\|^{1/2} & \leq c_\Delta + o_P(1) \\
\left\|\frac{1}{|I|} \sum_{i \in I} \vx^{[i]}(\vx^{[i]})^\top\right\|^{1/2} & \leq 1 + \sigma c_\Delta + o_P(1)\,.
\end{align*}
Moreover, the same claims hold with $I = [n]$, when $c_\Delta$ can be replaced by $2$.
\end{lemma}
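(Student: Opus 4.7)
The plan is to assemble three standard Gaussian concentration bounds, taking care of the only nontrivial issue: for $I\in\{\correct,\wrong\}$ the index set is itself random. However, $\correct$ and $\wrong$ are determined by the label-flip variables, which are independent of $\vz^{[1]},\dots,\vz^{[n]}$ by construction; hence conditional on the noise, the $\vz^{[i]}$ for $i\in I$ remain i.i.d.\ $\mathcal N(0,I_p)$. Moreover $|\correct|\sim\mathrm{Bin}(n,1-\Delta)$ and $|\wrong|\sim\mathrm{Bin}(n,\Delta)$, so $|I|/n$ concentrates at a positive constant depending on $\Delta$, and consequently (using $p\le n$) one has $p/|I|\le c'_\Delta$ with high probability. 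This reduces each of the three claims to a standard Gaussian concentration statement applied to $|I|$ i.i.d.\ vectors in $\R^p$.

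For the first bound, conditional on $\theta_0$ the scalars $(\theta_0^\top\vz^{[i]})^2$ are i.i.d.\ sub-exponential with mean $\|\theta_0\|^2$, so the weak law of large numbers (or Bernstein) yields $|I|^{-1}\sum_{i\in I}(\theta_0^\top\vz^{[i]})^2 = \|\theta_0\|^2+o_P(1)$, which gives the first inequality. For the second bound I would invoke the Davidson--Szarek / Bai--Yin bound on the operator norm of a Gaussian sample covariance: for $m$ i.i.d.\ $\mathcal N(0,I_p)$ vectors,
\[
\Bigl\|\tfrac{1}{m}\sum_{i=1}^m\vz^{[i]}(\vz^{[i]})^\top\Bigr\|^{1/2} \le 1+\sqrt{p/m}+o_P(1),
\]
which applied with $m=|I|$ and the bound $p/|I|\le c'_\Delta$ gives the claim with $c_\Delta:=1+\sqrt{c'_\Delta}$. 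For the third bound, decompose $\vx^{[i]}=(\pmlabelstrue)^{[i]}(\vv-\sigma\vz^{[i]})$ and expand
\[
\frac{1}{|I|}\sum_{i\in I}\vx^{[i]}(\vx^{[i]})^\top = \vv\vv^\top - \sigma\bigl(\vv\bar{\mathbf w}^\top + \bar{\mathbf w}\vv^\top\bigr) + \sigma^2\cdot\frac{1}{|I|}\sum_{i\in I}\vz^{[i]}(\vz^{[i]})^\top,
\]
with $\bar{\mathbf w}:=|I|^{-1}\sum_{i\in I}(\pmlabelstrue)^{[i]}\vz^{[i]}$. Conditional on the noise, $\bar{\mathbf w}\sim\mathcal N(0,|I|^{-1}I_p)$, so $\|\bar{\mathbf w}\|=O_P(\sqrt{p/|I|})$ is bounded by a $\Delta$-dependent constant. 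Combining with the triangle inequality and the second bound, the operator norm is at most $1+2\sigma\|\bar{\mathbf w}\|+\sigma^2 c_\Delta^2 \le (1+\sigma c_\Delta)^2+o_P(1)$ after enlarging $c_\Delta$; taking square roots completes the proof.

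There is no deep obstacle; the lemma is a bookkeeping exercise assembling standard Gaussian concentration tools. The only delicate point is ensuring that the randomness of $I$ for $I\in\{\correct,\wrong\}$ does not interfere with the standard arguments, which is resolved by the independence of the label noise from the Gaussian perturbations together with the binomial concentration of $|I|$ at $\Theta(n)$. The distinguished case $I=[n]$ is actually easier: $p/n\le 1$ deterministically and $|I|=n$, so the Davidson--Szarek bound $1+\sqrt{p/n}\le 2$ furnishes the claimed explicit constant $2$ without any $\Delta$-dependence.
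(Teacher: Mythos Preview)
Your proposal is correct and follows essentially the same route as the paper: law of large numbers for the first claim, the standard Gaussian spectral-norm bound $1+\sqrt{p/m}+o_P(1)$ applied conditionally on $I$ for the second, and a decomposition of $\vx^{[i]}$ into its $\vv$ and $\sigma\vz^{[i]}$ parts for the third. The only cosmetic difference is that for the third claim the paper writes the data matrix as $\mathbf X = \vv(\pmlabelstrue)^\top + \sigma\mathbf Z$ and bounds $\|\mathbf X/\sqrt{|I|}\,\|$ directly via the triangle inequality, which avoids your explicit cross term $\bar{\mathbf w}$; also, under the paper's noise model $|\wrong|\sim\mathrm{Bin}(n,\Delta/2)$ rather than $\mathrm{Bin}(n,\Delta)$, though this is inconsequential for the argument.
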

\begin{proof}
The first claim follows immediately from the law of large numbers.
For the second two claims, we first consider the case where $I = [n]$.
Let us write $\mathbf Z$ for the matrix whose columns are given by the vectors $\vz^{[i]}$.
Then
\begin{equation*}
    \left\|\frac{1}{n} \sum_{i \in [n]} \vz^{[i]}(\vz^{[i]})^\top\right\|^{1/2} = \left\|\frac 1n \mathbf Z \mathbf Z^{\top}\right\|^{1/2} = \left\|\frac 1{\sqrt n} \mathbf Z\right\| \leq 1 + \sqrt{p/n} + o_P(1)\,,
\end{equation*}
where the last claim is a  consequences of standard bounds for the spectral norm of Gaussian random matrices~\citep[see, e.g.][]{vershynin}.
Since $p \leq n$ by assumption, the claimed bound follows.
When $I = C$ or $W$, the same argument applies, except that we condition on the set of indices in $I$, which yields that, conditioned on $I$,
\begin{equation*}
    \left\|\frac{1}{|I|} \sum_{i \in I} \vz^{[i]}(\vz^{[i]})^\top\right\|^{1/2} \leq 2\sqrt{n/|I|} + o_P(1)\,.
\end{equation*}
For any $\Delta$, the random variable $|I|$ concentrates around its expectation, which is $c_\Delta n$, for some constant $c_\Delta$.

Finally, to bound $\left\|\frac{1}{|I|} \sum_{i \in I} \vx^{[i]}(\vx^{[i]})^\top\right\|^{1/2}$, we again let $\mathbf X$ be a matrix whose columns are given by $\vx^{[i]}$.
Then we can write
\begin{equation*}
    \mathbf X = \vv (\pmlabelstrue)^{\top} + \sigma \mathbf Z\,,
\end{equation*}
where $\mathbf Z$ is a Gaussian matrix, as above.
Therefore
\begin{equation*}
    \left\|\frac{1}{n} \sum_{i \in [n]} \vx^{[i]}(\vx^{[i]})^\top\right\|^{1/2} = \left\|\frac 1{\sqrt n}\mathbf X \right\| \leq \frac 1{\sqrt n} \left\|\vv (\pmlabelstrue)^{\top}\right\| + \sigma \left\|\frac 1{\sqrt n} \mathbf Z\right\| \leq 1 + 2 \sigma + o_P(1)\,.
\end{equation*}
The extension to $I = C$ or $W$ is as above.
\end{proof}
\newpage

\begin{figure}[t]
    \begin{tabular}{>{\centering\arraybackslash}m{0.13\linewidth} >{\centering\arraybackslash}m{0.4\linewidth} >{\centering\arraybackslash}m{0.4\linewidth}}
    & {\small Clean labels} & {\small Wrong labels}\\
    {\small Cross Entropy}
    & \includegraphics[width=\linewidth]{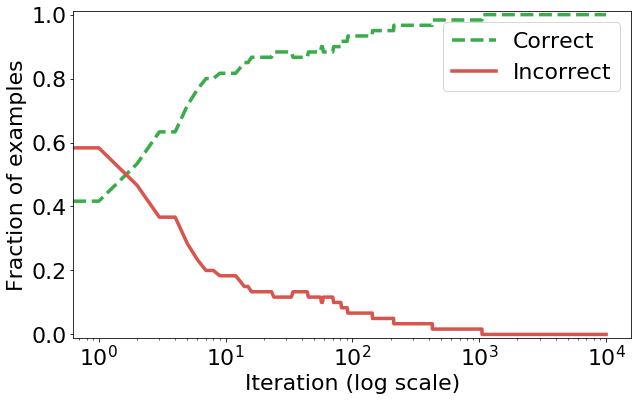}&
    \includegraphics[width=\linewidth]{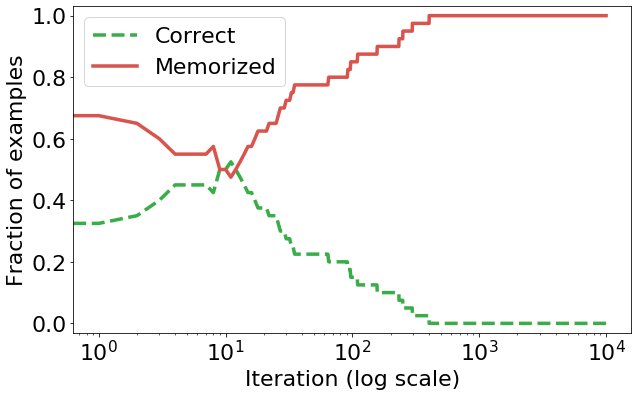}\\
    {\small \shortstack{Early-learning\\Regularization}}
    & \includegraphics[width=\linewidth]{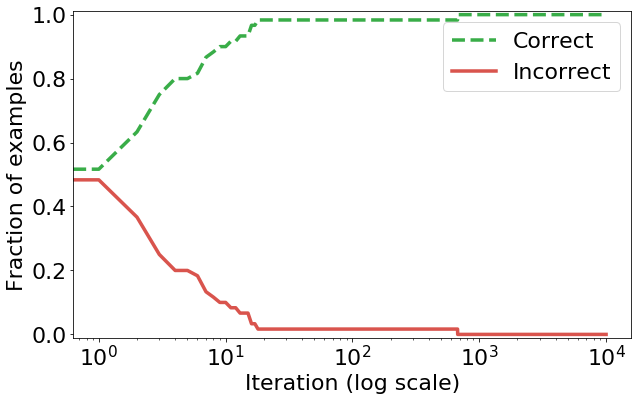}&
    \includegraphics[width=\linewidth]{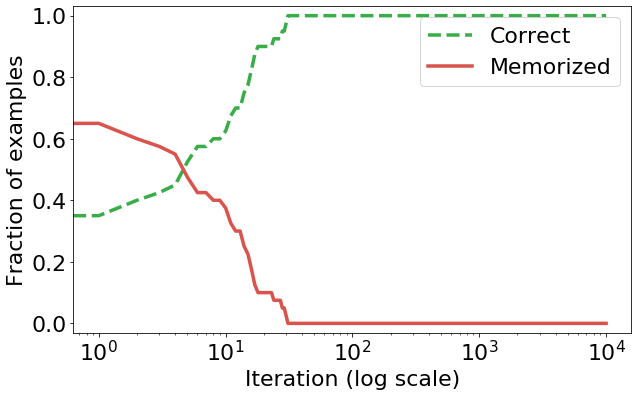}
  \end{tabular}

    \caption{Results of training a two-class softmax regression model with a traditional cross entropy loss (top row) and the proposed method (bottom row) to perform classification on 50 simulated data drawn from a mixture of two Gaussians in $\mathbb{R}^{100}$ with $\sigma=0.1$, where 40\% of the labels are flipped at random.  The plots show the fraction of examples with clean labels predicted correctly (green) and incorrectly (red) for examples with clean labels (left column) and wrong labels (right column). Analogously to the deep-learning model in Figure~\ref{fig:CE_vs_ELR}, the linear model trained with cross entropy begins by learning to predict the true labels, but eventually memorizes the examples with wrong labels. Early-learning regularization prevents memorization, allowing the model to continue learning on the examples with clean labels to attain high accuracy on examples with clean and wrong labels.}
    \label{fig:CE_vs_ELR_linear}
    
\end{figure}

\section{Early Learning and Memorization in Linear and Deep-Learning Models}
In this section we provide a numerical example to illustrate the theory in Section~\ref{sec:linear}, and the similarities between the behavior of linear and deep-learning models. We train the two-class softmax linear regression model described in Section~\ref{sec:linear} on data drawn from a mixture of two Gaussians in $\mathbb{R}^{100}$, where 40\% of the labels are flipped at random. Figure~\ref{fig:CE_vs_ELR_linear} shows the training accuracy on the training set for examples with clean and false labels. 
Analogously to the deep-learning model in Figure~\ref{fig:CE_vs_ELR}, the linear model trained with cross entropy begins by learning to predict the true labels, but eventually memorizes the examples with wrong labels as predicted by our theory. The figure also shows the results of applying our proposed early-learning regularization technique with temporal ensembling. ELR prevents memorization, allowing the model to continue learning on the examples with clean labels to attain high accuracy on examples with clean and wrong labels. 

As explained in Section~\ref{sec:ELR}, for both linear and deep-learning models the effect of label noise on the gradient of the cross-entropy loss for each example $i$ is restricted to the term $\vp^{[i]} - \vy^{[i]}$, where $\vp^{[i]}$ is the probability example assigned by the model to the example and $\vy^{[i]}$ is the corresponding label. Figure~\ref{fig:Gradient_CE} plots this quantity for the linear model described in the previous paragraph and for the deep-learning model from Figure~\ref{fig:CE_vs_ELR}. In both cases, the label noise flips the sign of the term on the wrong labels (left column). The magnitude of this term dominates after early learning (right column), eventually producing memorization of the wrong labels. 

\begin{figure}[t]
    \centering
    \begin{tabular}{>{\centering\arraybackslash}m{0.13\linewidth} >{\centering\arraybackslash}m{0.4\linewidth} >{\centering\arraybackslash}m{0.4\linewidth}}
    {\small Linear model}
    &  \includegraphics[width=\linewidth]{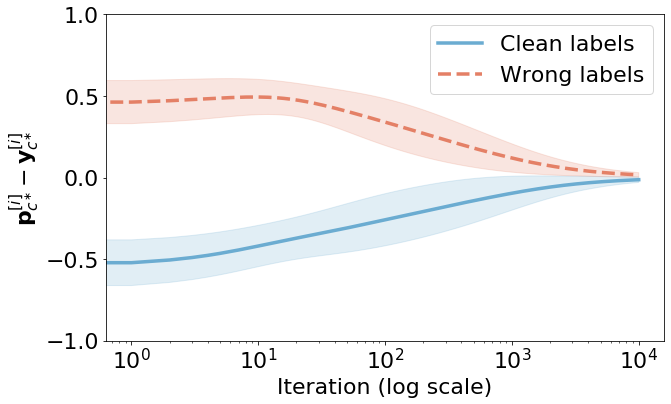}&
     \includegraphics[width=\linewidth]{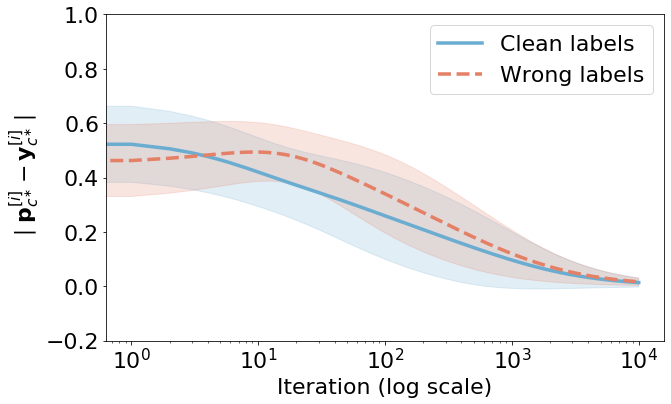}\\
    {\small Neural Network}
    &  \includegraphics[width=\linewidth]{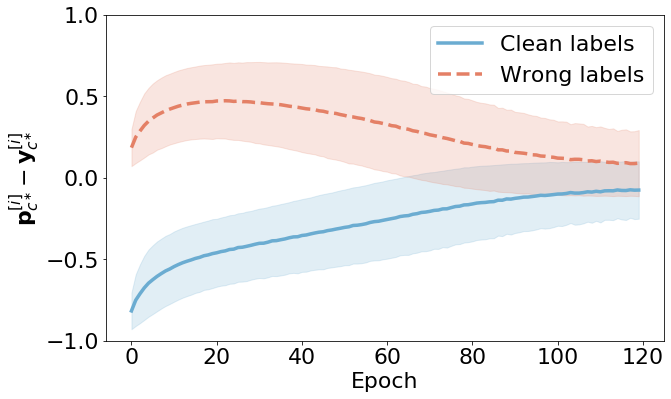}&
     \includegraphics[width=\linewidth]{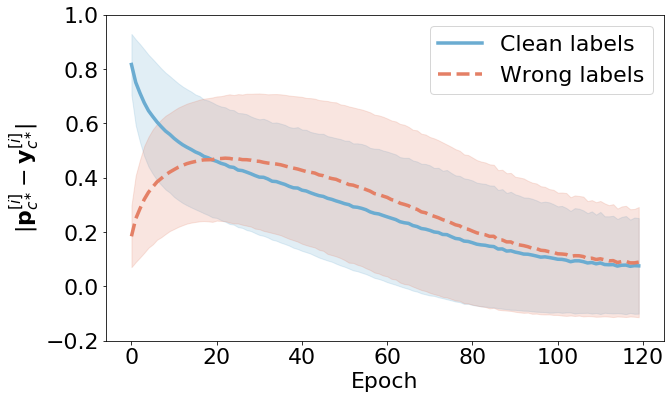}\\
  \end{tabular}
  
    \caption{The effect of label noise on the gradient of the cross-entropy loss for each example $i$ is restricted to the term $\vp^{[i]} - \vy^{[i]}$, where $\vp^{[i]}$ is the probability example assigned by the model to the example and $\vy^{[i]}$ is the corresponding label (see Section~\ref{sec:ELR}). The plots show this term (left column) and its magnitude (right column) for the same linear model as in Figure~\ref{fig:CE_vs_ELR_linear} (top row) and the same ResNet-34 on CIFAR-10 as in Figure~\ref{fig:CE_vs_ELR} (bottom row) with 40\% symmetric noise. On the left, we plot the entry of $\vp^{[i]} - \vy^{[i]}$ corresponding to the true class, denoted by $c^{\ast}$, for training examples with clean (blue) and wrong (red) labels. On the right, we plot the absolute value of the entry. During early learning, the clean labels dominate, but afterwards their effects decrease and the noisy labels start to be dominant, eventually leading to memorization of the wrong labels. In all plots the curves represent the mean value, and the shaded regions are within one standard deviation of the mean.}
    \label{fig:Gradient_CE}
\end{figure}
\section{Regularization Based on Kullback-Leibler Divergence \label{sec:kl}}
A natural alternative to our proposed regularization would be to penalize the Kullback-Leibler (KL) divergence between the the model output and the targets. This results in the following loss function
\begin{align} 
\mathcal{L}_\text{CE}(\param) -  \frac{ \lambda}{n}\sum_{i=1}^n\sum_{c=1}^{C} \vq^{[i]}_c \log \vp^{[i]}_c. \label{eq:kl_loss}
\end{align}
Figure~\ref{fig:KL_reg} shows the result of applying this regularization to CIFAR-10 dataset with 40\% symmetric noise for different values of the regularization parameter $\lambda$, using targets computed via temporal ensembling. In contrast to ELR, which succeeds in avoiding memorization while allowing the model to learn effectively as demonstrated in the bottom row of Figure~\ref{fig:CE_vs_ELR}, regularization based on KL divergence fails to provide robustness. When $\lambda$ is small, memorization of the wrong labels leads to overfitting as in cross-entropy minimization. Increasing $\lambda$ delays memorization, but does not eliminate it. Instead, the model starts overfitting the initial estimates, whether correct or incorrect, and then eventually memorizes the wrong labels (see the bottom right graph in Figure~\ref{fig:KL_reg}).

Analyzing the gradient of the cost function sheds some light on the reason for the failure of this type of regularization. The gradient with respect to the model parameters $\param$ equals
\begin{align}
    \frac{1}{n}\sum_{i=1}^n \nabla \func_{\vx^{[i]}}(\param) \left(\left(  \vp^{[i]} - \vy^{[i]}\right)+\lambda \left(  \vp^{[i]} - \vq^{[i]}\right) \right).
\end{align}
A key difference between this gradient and the gradient of ELR is the dependence of the sign of the regularization component on the targets. In ELR, the sign of the $c$th entry for the $i$th is determined by the difference between $\vq^{[i]}_c$ and the rest of the entries of $\vq^{[i]}$ (see Lemma~\ref{lemma:ELR_gradient}). In contrast, for KL divergence it depends on the difference between $\vq^{[i]}_c$ and $\vp^{[i]}_c$. This results in overfitting the target probabilities. To illustrate this, recall that for examples with clean labels, the cross-entropy term $\vp^{[i]} - \vy^{[i]}$ tends to vanish after the early-learning stage because $\vp^{[i]}$ is very close to $\vy^{[i]}$, allowing examples with wrong labels to dominate the gradient. Let $c^{\ast}$ denote the true class. When $\vp^{[i]}_{c^{\ast}}$ (correctly) approaches one, $\vq^{[i]}_{c^{\ast}}$ will generally tend to be smaller, because $\vq^{[i]}$ is obtained by a moving average and therefore tends to be smoother than $\vp^{[i]}$. Consequently, the regularization term tends to decrease $\vp^{[i]}_{c^{\ast}}$. This is exactly the opposite effect than desired. In contrast, ELR tends to keep $\vp^{[i]}_{c^{\ast}}$ large, as explained in Section~
\ref{sec:ELR}, which allows the model to continue learning on the clean examples.

\begin{figure}[t]
    \begin{tabular}{>{\centering\arraybackslash}m{0.1\linewidth} >{\centering\arraybackslash}m{0.4\linewidth} >{\centering\arraybackslash}m{0.4\linewidth}}
    $\lambda$ & {\small Clean labels} & {\small Wrong labels}\\
    {\small 1}
    & \includegraphics[width=\linewidth]{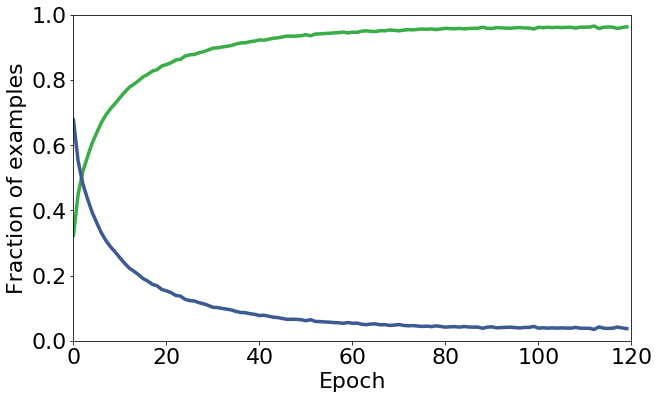}&
    \includegraphics[width=\linewidth]{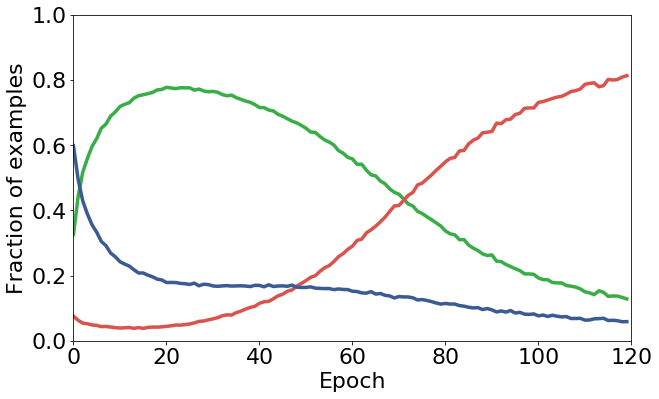}\\
    {\small 5}
    & \includegraphics[width=\linewidth]{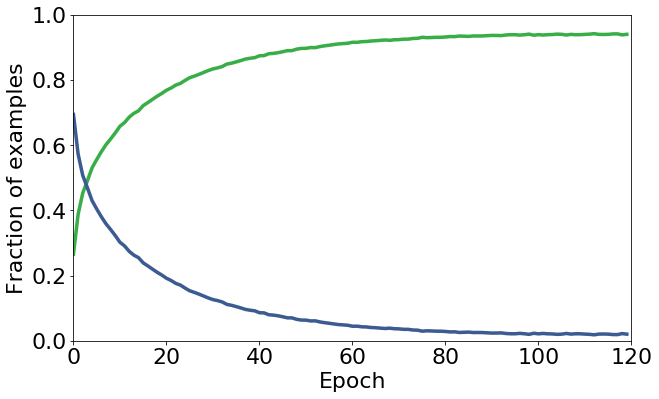}&
    \includegraphics[width=\linewidth]{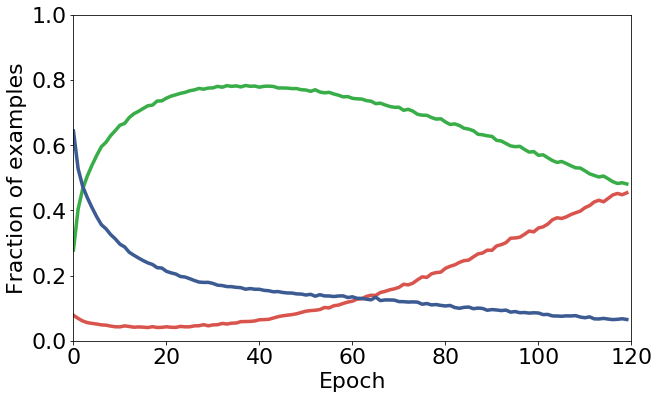}\\
    {\small 10}
    & \includegraphics[width=\linewidth]{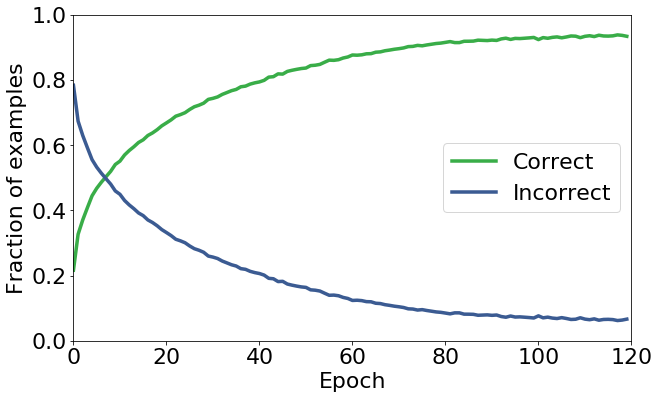}&
    \includegraphics[width=\linewidth]{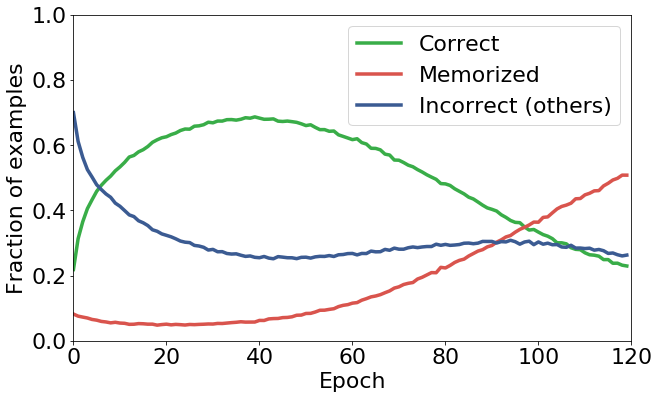}
  \end{tabular}

    \caption{
    Results of training a ResNet-34 neural network with a traditional cross entropy loss regularized by KL divergence using different coefficients $\lambda$ (showed in different rows) to perform classification on the CIFAR-10 dataset where 40\% of the labels are flipped at random.  The left column shows the fraction of examples with clean labels that are predicted correctly (green) and incorrectly (blue). The right column shows the fraction of examples with wrong labels that are predicted correctly (green), \emph{memorized} (the prediction equals the wrong label, shown in red), and incorrectly predicted as neither the true nor the labeled class (blue). When $\lambda = 1$, it is analogous to the model trained without any regularization (top row in Figure~\ref{fig:CE_vs_ELR}), while when $\lambda$ increases, the fraction of correctly predicted examples decreases, indicating worse performance.\vspace{-3mm}}
    \label{fig:KL_reg}
    
\end{figure}
\section{The Need for Early Learning Regularization}
\begin{figure}[t]
    \centering
    \includegraphics[width=0.5\linewidth]{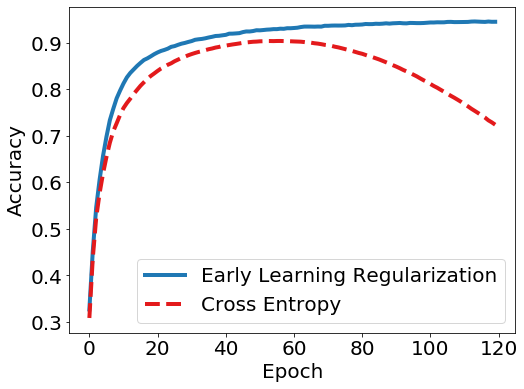}
    \caption{Validation accuracy achieved by targets estimated via temporal ensembling using the cross entropy loss and our proposed cost function. The model is a ResNet-34 trained on CIFAR-10 with 40\% symmetric noise. The temporal ensembling momentum $\beta$ is set to 0.7. Without the regularization term, the targets eventually overfit the noisy labels.}
    \label{fig:targets_training}
\end{figure}
Our proposed framework consists of two components: target estimation and the early-learning regularization term. Figure~\ref{fig:targets_training} shows that the regularization term is critical to avoid memorization. If we just perform target estimation via temporal ensembling while training with a cross-entropy loss, eventually the targets overfit the noisy labels, resulting in decreased accuracy.

\section{Proof of Lemma~\ref{lemma:ELR_gradient}\label{sec:proof_lemma}}
To ease notation, we ignore the $i$ superscript, setting $\vp :=\vp^{[i]}$ and $\vq:=\vq^{[i]}$. We denote the instance-level ELR by 
\begin{align} 
\mathcal{R}(\param) := \log \left(1-\langle \vp,\vq \rangle\right). 
\end{align}
The gradient of $\mathcal{R}$ is
\begin{align}
    \nabla \mathcal{R}(\param) & = \frac{1}{1-\langle \vp,\vq \rangle} \nabla \left(1-\langle \vp,\vq \rangle\right).
    \label{eq:reg_gradient}
\end{align}
We express the probability estimate in terms of the softmax function and the deep-learning mapping $\func_{\vx}{(\param)}$, $\vp := \frac{e^{\func_{\vx}(\param)}}{\sum_{c=1}^C e^{\left(\func_{\vx}(\param)\right)_c}}$, where $e^{\func_{\vx}(\param)}$ denotes a vector whose entries equal the exponential of the entries of $\func_{\vx}{(\param)}$. Plugging this into Eq.~(\ref{eq:reg_gradient}) yields 
\begin{align}
  \nabla \mathcal{R}(\param)  & = \sum_{i=1}^n\frac{1}{1-\langle \vp,\vq \rangle} \nabla \left(1-\frac{\langle e^{\func_{\vx}(\param)},\vq \rangle}{\sum_{c=1}^C e^{\left(\func_{\vx}(\param)\right)_c}}\right)\\
    &=\sum_{i=1}^n\frac{-1}{1-\langle \vp,\vq \rangle} \frac{\nabla\langle e^{\func_{\vx}(\param)},\vq \rangle \cdot \sum_{c=1}^C e^{\left(\func_{\vx}(\param)\right)_c} - \langle e^{\func_{\vx}(\param)},\vq \rangle\cdot \nabla  
    \sum_{c=1}^C e^{\left(\func_{\vx}(\param)\right)_c}
    }{\left(\sum_{c=1}^C e^{\left(\func_{\vx}(\param)\right)_c}\right)^2}\\
    & =\sum_{i=1}^n\frac{-\nabla \func_{\vx}(\param)}{1-\langle \vp,\vq \rangle} 
    \frac{ e^{\func_{\vx}(\param)}\odot \vq \cdot \sum_{c=1}^C e^{\left(\func_{\vx}(\param)\right)_c} - \langle e^{\func_{\vx}(\param)},\vq \rangle\cdot 
    e^{\func_{\vx}(\param)}
    }{\left(\sum_{c=1}^C
    e^{\left(\func_{\vx}(\param)\right)_c}\right)^2}\\
    & = \sum_{i=1}^n\frac{-\nabla \func_{\vx}(\param)}{1-\langle \vp,\vq \rangle}\left(
    \frac{e^{\func_{\vx}(\param)}\odot \vq}{\sum_{c=1}^C e^{\left(\func_{\vx}(\param)\right)_c}} - \frac{\langle e^{\func_{\vx}(\param)},\vq \rangle}{\sum_{c=1}^C e^{\left(\func_{\vx}(\param)\right)_c}}\cdot  \frac{e^{\func_{\vx}(\param)}}{\sum_{c=1}^C e^{\left(\func_{\vx}(\param)\right)_c}}\right).
\end{align}
The formula can be simplified to 
\begin{align}
     \nabla \mathcal{R}(\param) & = \frac{-\nabla \func_{\vx}(\param)}{1-\langle \vp,\vq \rangle}\left(\vp\odot \vq - \langle\vp,\vq\rangle\cdot \vp \right)\\
    & =   \frac{\nabla \func_{\vx}(\param)}{1-\langle \vp,\vq \rangle} \begin{bmatrix} \vp_1 \cdot \left( \langle\vp,\vq\rangle - \vq_1\right)\\ \vdots \\ \vp_C\cdot \left( \langle\vp,\vq\rangle - \vq_C\right) \end{bmatrix}\\
    & =   \frac{\nabla \func_{\vx}(\param)}{1-\langle \vp,\vq \rangle} \begin{bmatrix} \vp_1 \cdot \sum_{k=1}^C\left(\vq_k-\vq_1\right)\vp_k\\ \vdots \\  \vp_C \cdot \sum_{k=1}^C\left(\vq_k-\vq_C\right)\vp_k \end{bmatrix}.
\end{align}

\section{Algorithms}
\label{sec:algorithms}
\begin{algorithm}[t]
\caption{\label{alg:algo_ELR}\ \ 
Pseudocode for ELR with temporal ensembling. 
}
\begin{tabbing}
\Req $\{\vx^{[i]},\vy^{[i]}\},\; 1 \leq i \leq n$        \= = training data (with noisy labels) \hspace*{-16mm} \= \\
\Req $\beta$ \> = temporal ensembling momentum, $0 \le \beta < 1$ \\
\Req $\lambda$  \>  = regularization parameter \\
\Req $\func_\vx(\param)$\>    = neural network with trainable parameters $\param$\\
\X $\vq$    \= $\gets \mathbf{0}_{[n \times C]}$ \hspace*{7mm}                   \>   \cm{initialize ensemble predictions} \\
\X {\bf for} $t$ in $[1,\mathit{num\_epochs}]$ {\bf do} \\
\XX {\bf for} each minibatch $B$ {\bf do} \\
\XXX {\bf for} $i$ in $B$ {\bf do}\\
\XXXX $\vp^{[i]} \gets \smax\left(\func_{\vx_i}(\param)\right)$ 
        \>\> \cm{evaluate network outputs}\\
\XXXX $\vq^{[i]} \gets \beta\vq^{[i]} + (1-\beta)  \vp^{[i]}$                        \>\> \cm{temporal ensembling} \\
\XXX {\bf end for}\\
\XXX $\text{loss}{}\gets$\=${}-\frac{1}{|B|}\sum_{i=1}^{|B|}\sum_{c=1}^{C} \vy^{[i]}_c \log \smax\left(\func_{\vx_i}(\param)\right)_c$  \>   \cm{cross entropy loss component} \\
\XXX \>${}+ \frac{ \lambda}{|B|}\sum_{i\in B} \log \left(1-\langle \smax\left(\func_{\vx_i}(\param)\right),\vq^{[i]} \rangle\right) $                  \>   \cm{proposed regularization component} \\

\XXX update $\param$ using stochastic gradient descent                               \>\> \cm{update network parameters} \\
\XX {\bf end for} \\
\X {\bf end for}\\
\X {\bf return} $\param$
\end{tabbing}
\vspace*{-1.5mm}
\end{algorithm}

\begin{algorithm}[!h]
\caption{\label{alg:algo_ELR_plus}\ \ 
Pseudocode for ELR+. 
}
\begin{tabbing}
\Req $\{\vx^{[i]},\vy^{[i]}\},\; 1\leq i \leq n$        \= = training data (with noisy labels) \hspace*{-16mm} \= \\
\Req $\beta$ \> = temporal ensembling momentum, $0 \le \beta < 1$ \\
\Req $\gamma$\>  = weight averaging momentum, $0 \le \gamma < 1$\\
\Req $\lambda$  \>  = regularization parameter \\
\Req $\alpha$  \>  = mixup hyperparameter  \\

\Req $\func_\vx(\param_1)$\>    = neural network 1 with trainable parameters $\param_1$\\
\Req $\func_\vx(\param_2)$\>    = neural network 2 with trainable parameters $\param_2$\\
\X $\vq_1$, $\vq_2$   \= $\gets \mathbf{0}_{[n \times C]}$, $\mathbf{0}_{[n \times C]}$ \hspace*{5mm}                   \>   \cm{initialize averaged predictions} \\
\X $\bar{\param}_1$, $\bar{\param}_2$    \= $\gets \mathbf{0}$, $\mathbf{0}$ \hspace*{5mm}                   \>   \cm{initialize averaged weights (untrainable)} \\
\X {\bf for} $t$ in $[1,\mathit{num\_epochs}]$ {\bf do} \\
\XX {\bf for} $k$ in $[1,2]$ {\bf do}\>\>   \cm{for each network}\\
\XXX {\bf for} each minibatch $B$ {\bf do} \\
\XXXX $\tilde{B} \gets \text{mixup}(B, \alpha)$ \>\> \cm{\textit{mixup} augmentation on the mini-batch}\\
\XXXX $\bar{\param}_k = \gamma \bar{\param}_k + (1-\gamma)\param_k$  \>\> \cm{weight averaging}\\
\XXXX {\bf for} $i$ in $B$ {\bf do}\\
\XXXXX $\vp^{[i]} \gets 
\smax\left(\func_{\vx_{i}}(\bar{\param}_{\{1,2\}\setminus k})\right)$ 
        \>\> \cm{network evaluation with weight averaging}\\
\XXXXX $\vq_k^{[i]} \gets \beta \vq_k^{[i]} + (1-\beta)\vp^{[i]}$                     \>\> \cm{temporal ensembling} \\
\XXXX {\bf end for}\\
\XXXX $\text{loss}{}\gets$\=${}-\frac{1}{|B|}\sum_{i=1}^{|B|}\sum_{c=1}^{C} \vy^{[i]}_c \log\smax\left(\func_{\tilde{\vx}_i}(\param_k)\right)_c$  \>  \hspace*{3mm} \cm{cross entropy loss component} \\
\XX \>${}+\frac{ \lambda}{|B|}\sum_{i\in B} \log \left(1-\langle \smax\left(\func_{\tilde{\vx}_i}(\param_k),\tilde{\vq}^{[i]} \rangle\right)\right) $                  \>  \hspace*{3mm} \cm{proposed regularization component} \\

\XXXX update $\param_k$ using SGD                               \>\> \cm{update network parameters} \\
\XXX {\bf end for} \\
\XX {\bf end for}\\
\X {\bf end for}\\
\X {\bf return} $\param_1$, $\param_2$
\end{tabbing}
\vspace*{-1.5mm}
\end{algorithm}

Algorithm~\ref{alg:algo_ELR} and Algorithm~\ref{alg:algo_ELR_plus} provide detailed pseudocode for ELR combined with temporal ensembling (denoted simply by ELR) and ELR combined with temporal ensembling, weight averaging, two networks, and mixup data augmentation (denoted by ELR+)  respectively. For CIFAR-10 and CIFAR-100, we use the sigmoid shaped function $e^{-5(1-i/40000)^2}$ ($i$ is current training step, following~\citep{tarvainen2017mean}) to ramp-up the weight averaging momentum $\gamma$ to the value we set as a hyper-parameter. For the other datasets, we fixed $\gamma$. For CIFAR-100, we also use previously mentioned sigmoid shaped function to ramp up the coefficient $\lambda$ to the value we set as a hyper-parameter. Moreover, each entry of the labels $y$ will also be updated by the targets $t$ using $\frac{y_ct_c}{\sum_{c=1}^C y_ct_c}$ in CIFAR-100.

To apply \textit{mixup} data augmentation, when processing the $i$th example in a mini-batch $(\vx^{[i]},\vy^{[i]},\vq^{[i]})$, we randomly sample another example $(\vx^{[j]},\vy^{[j]},\vq^{[j]})$, and compute the $i$th mixed data $(\tilde{\vx}^{[i]},\tilde{\vy}^{[i]},\tilde{\vq}^{[i]})$ as follows: 
    \begin{align*}
        \ell &\sim \text{Beta}(\alpha,\alpha),\\
        \ell' &= \max (\ell, 1-\ell),\\
        \tilde{\vx}^{[i]} &= \ell' \vx^{[i]} + (1-\ell')\vx^{[j]},\\
        \tilde{\vy}^{[i]} &= \ell' {\vy}^{[i]} + (1-\ell'){\vy}^{[j]},\\
        \tilde{\vq}^{[i]} &= \ell' {\vq}^{[i]} + (1-\ell'){\vq}^{[j]},
    \end{align*}
where $\alpha$ is a fixed hyperparameter used to choose the symmetric beta distribution from which we sample the ratio of the convex combination between data points.

\section{Description of the Computational Experiments}
Source code for the experiments is available at \url{https://github.com/shengliu66/ELR}.
\label{sec:experiments_appendix}
\begin{table}[t]
\footnotesize
\begin{center}
\begin{tabular}{c|c|c|c|c|c}
\toprule
Data set & Train & Val & Test & Image size & \# classes \\
\midrule
\multicolumn{6}{c}{Datasets with Clean Annotation}\\
\midrule
CIFAR-10 & 45K & 5k& 10K & $32\times 32$ & 10 \\
CIFAR-100 & 45K & 5k & 10K & $32\times 32$ & 100 \\
\midrule
\multicolumn{6}{c}{Datasets with Real World Noisy Annotation}\\
\midrule
Clothing-1M & 1M & 14K &10K & $224\times 224$ & 14\\
Webvision1.0 & 66K & - &2.5K & $256\times 256$ & 50\\
\bottomrule
\end{tabular}
\end{center}
\caption{Description of the datasets used in our computational experiments, including the training, validation and test splits.}
\label{tab:data_describ}
\end{table}

\subsection{Dataset Information}
In our experiments we apply ELR and ELR+ to perform image classification on four benchmark datasets: CIFAR-10, CIFAR-100, Clothing-1M, and a subset of WebVision. Because CIFAR-10, CIFAR-100 do not have predefined validation sets, we retain 10\% of the training sets to perform validation. Table~\ref{tab:data_describ} provides a detailed description of each dataset.

\subsection{Data preprocessing}
We apply normalization and simple data augmentation techniques (random crop and horizontal flip) on the training sets of all datasets. The size of the random crop is set to be consistent with previous works~\cite{zhang2018generalized, Jiang2018MentorNetLD}: $32 \time 32 $ for the CIFAR datasets, $224\times 224$ for Clothing1M (after resizing to $256 \times 256$), and $227\times 227$ for WebVision.

\subsection{Training Procedure}
Below we describe the training procedure for ELR (i.e. the proposed approach with temporal ensembling) for the different datasets. The information for ELR+ is shown in Table~\ref{tab:ELR_plus_hyperparameters}. In ELR+ we ensemble the outputs of two networks during inference, as is customary for  methods that train two networks simultaneously~\cite{li2020dividemix,Han2018CoteachingRT}.

\textbf{CIFAR-10/CIFAR-100}: We use a ResNet-34~\cite{he2016deep} and train it using SGD with a momentum of 0.9, a weight decay of $0.001$, and a batch size of $128$. The network is trained for $120$ epochs for CIFAR-10 and $150$ epochs for CIFAR-100. We set the initial learning rate as 0.02, and reduce it by a factor of 100 after 40 and 80 epochs for CIFAR-10 and after 80 and 120 epochs for CIFAR-100. We also experiment with cosine annealing learning rate~\cite{Loshchilov2017SGDRSG} where the maximum number of epoch for each period is set to $10$, the maximum and minimum learning rate is set to $0.02$ and $0.001$ respectively, total epoch is set to 150. 

\textbf{Clothing-1M}: We use a ResNet-50 pretrained on ImageNet same as Refs.~\cite{Wang2019SymmetricCE,xiao2015learning}. The model is trained with batch size 64 and initial learning rate 0.001, which is reduced by $1/100$ after 5 epochs (10 epochs in total). The optimization is done using SGD with a momentum 0.9, and weight decay $0.001$. For each epoch, we sample 2000 mini-batches from the training data ensuring that the classes of the noisy labels are balanced.

\textbf{WebVision}: Following Refs.~\cite{Jiang2018MentorNetLD, li2020dividemix}, we use an InceptionResNetV2 as the backbone architecture. All other optimization details are the same as for CIFAR-10, except for the weight decay ($0.0005$) and the batch size ($32$).

\subsection{Hyperparameters selection\label{sec:hyperparameters_select}}
 We perform hyperparameter tuning on the CIFAR datasets via grid search: the temporal ensembling parameter $\beta$ is chosen from $\{0.5,0.7,0.9,0.99\}$ and the regularization coefficient $\lambda$ is chosen from $\{1, 3 , 5, 7, 10\}$ using the validation set. The selected values are $\beta = 0.7$ and $\lambda = 3$ for symmetric noise, $\beta = 0.9$ and $\lambda = 1$ for assymetric noise on CIFAR-10, and $\beta = 0.9$ and $\lambda = 7$ CIFAR-100. For Clothing1M and WebVision we use the same values as for CIFAR-10. As shown in Section~\ref{sec:hyperparameters}, the performance of the proposed method seems to be robust to changes in the hyperparameters. For ELR+, we use the same values for $\lambda$ and $\beta$. The $\textit{mixup}$ $\alpha$ is set to 1 (chosen from $\{0.1,2,5\}$ via grid search on the validation set) and the value of the weight averaging parameter $\gamma$ is set to $0.997$ (which is the default value in the public code of Ref.~\cite{tarvainen2017mean}) except Clothing1M, which is set to 0.9999.

\begin{table}[t]
\footnotesize
\begin{center}
\resizebox{\linewidth}{!}{
\begin{tabular}{c|cccc}
\toprule
 & CIFAR-10 & CIFAR-100 & Clothing-1M & Webvision\\
\midrule

batch size & 128 & 128 & 64 & 32\\
architecture & PreActResNet-18 & PreActResNet-18 & ResNet-50 (pretrained) & InceptionResNetV2\\
training epochs & 200 & 250 & 15 & 100\\
learning rate (lr) & 0.02 & 0.02&  0.002 & 0.02\\
lr scheduler & divide 10 at 150th epoch & divide 10 at 200th epoch & divide 10 at 7th epoch & divide 10 at 50th epoch\\
weight decay & 5e-4  & 5e-4 & 1e-3 & 5e-4\\
\bottomrule
\end{tabular}}
\end{center}
\label{tab:ELR_plus_hyperparameters}
\caption{Training hyperparameters for ELR+ on CIFAR-10, Clothing-1M and Webvision.}
\end{table}

\section{Sensitivity to Hyperparameters \label{sec:hyperparameters}}
The main hyperparameters of ELR are the temporal ensembling parameter $\beta$ and regularization coefficient $\lambda$. As shown in the left image of Figure~\ref{fig:ablation_hyperparameters}, performance is robust to the value of $\beta$, although it is worth noting that this is only as long as the momentum of the moving average is large. The performance degrades to 38\% when the model outputs are used to estimate the target without averaging (i.e. $\beta = 0$). The regularization parameter $\lambda$ needs to be large enough to neutralize the gradients of the falsely labeled examples but also cannot be too large, to avoid neglecting the cross entropy term in the loss. As shown in the center image of Figure~\ref{fig:ablation_hyperparameters}, the sensitivity to  $\lambda$ is also quite mild. Finally, the right image of Figure~\ref{fig:ablation_hyperparameters} shows results for ELR combined with mixup data augmentation for different values of the mixup parameter $\alpha$. Performance is again quite robust, unless the parameter becomes very large, resulting in a peaked distribution that produces too much mixing.

\begin{figure}[t]
\resizebox{\linewidth}{!}{
    \begin{tabular}{>{\centering\arraybackslash}m{0.33\linewidth} >{\centering\arraybackslash}m{0.33\linewidth} >{\centering\arraybackslash}m{0.33\linewidth}}
    {\small \; Temporal ensembling momentum $\beta$} & {\small \quad Regularization coefficient $\lambda$} & {\small mixup parameter $\alpha$}\\
    \includegraphics[width=1.1\linewidth]{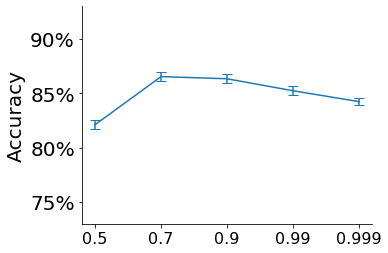} &
     \includegraphics[width=\linewidth]{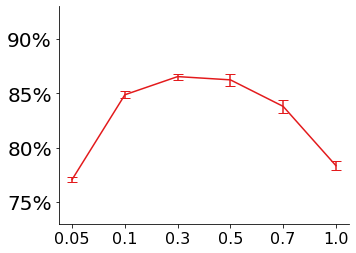} &{\vspace{3.5mm}\includegraphics[width=\linewidth]{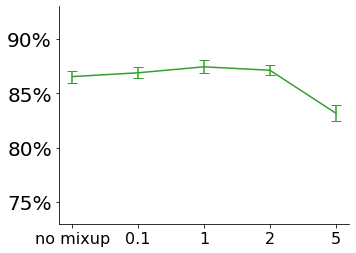}}

  \end{tabular}
}
    \caption{Test accuracy on CIFAR-10 with symmetric noise level 60\%. The mean accuracy over four runs is reported, along with bars representing one standard deviation from the mean. In each experiment, the rest of hyperparameters are fixed to the values reported in Section~\ref{sec:hyperparameters_select}.}
    \label{fig:ablation_hyperparameters}
\end{figure}

\section{Training Time Analysis}
In Table~\ref{tab:training_time} we compare the training times of ELR and ELR+ with two state-of-the-art methods, using a single Nvidia v100 GPU. ELR+ is twice as slow as ELR. DivideMix takes more than 2 times longer than ELR+ to train. Co-teaching+ is about twice as slow as ELR+.  
\begin{table}[h]
\footnotesize
\begin{center}
\begin{tabular}{c| c|c|c|c}
\toprule
Co-teaching+\cite{Yu2019HowDD} & DivideMix\cite{li2020dividemix} & ELR & ELR+\\
\midrule
4.4h & 5.4h & 1.1h & 2.3h\\
\bottomrule
\end{tabular}
\end{center}
\caption{Comparison of total training time in hours on CIFAR-10 with 40\% symmetric label noise.}
\label{tab:training_time}
\end{table}

\end{document}